\documentclass[runningheads]{llncs}

\usepackage[T1]{fontenc}
\usepackage{graphicx}

\usepackage{bbm}
\usepackage{amsmath,amssymb,mathtools}
\newcommand{\qedsymbol}{$\square$}
\AtEndEnvironment{proof}{\hfill\qedsymbol}

\usepackage{algorithm}
\usepackage{algpseudocode}
\algdef{SE}[DOWHILE]{Do}{doWhile}{\algorithmicdo}[1]{\algorithmicwhile\ #1}

\usepackage{dsfont}

\usepackage{xcolor}
\usepackage{enumitem}
\usepackage{url}

\usepackage{wrapfig}
\usepackage[caption=false]{subfig}

\usepackage{booktabs}
\usepackage{siunitx}
\usepackage{comment}

\usepackage{placeins}
\usepackage{soul,xcolor}

\usepackage[most]{tcolorbox}

\begin{document}
\title{Optimal Knock-Pick Planning for\\Tightly Packed Tabletop Blocks\\With Parallel Grippers}

\author{Hao Lu \and Rahul Shome}
\authorrunning{H. Lu et al.}
\titlerunning{Optimal Tabletop Manipulation of Tightly Packed Blocks}
\institute{School of Computing\\Australian National University}

\maketitle

\newcommand{\Z}{\mathbb{Z}}
\newcommand{\abs}[1]{\left|#1\right|}
\newcommand{\set}[1]{\left\{#1\right\}}
\newcommand{\cH}{\mathcal{H}}
\newcommand{\cB}{\mathcal{B}}
\newcommand{\cC}{\mathcal{C}}
\newcommand{\cS}{\mathcal{S}}
\newcommand{\cU}{\mathcal{U}}
\newcommand{\cF}{\mathcal{F}}
\newcommand{\cG}{\mathcal{G}}
\newcommand{\cV}{\mathcal{V}}
\newcommand{\cK}{\mathcal{K}}
\newcommand{\cM}{\mathcal{M}}

\newcommand{\EC}{\cC_{\mathrm{E}}}

\newcommand{\G}{G}
\newcommand{\E}{E}

\newcommand{\Clean}{\mathsf{Clean}}
\newcommand{\Core}{\mathsf{Core}}
\newcommand{\layer}{\lambda}

\newcommand{\Knock}{\mathsf{Knock}}
\newcommand{\Pick}{\mathsf{Pick}}

\newcommand{\OPT}{\mathsf{OPT}}
\newcommand{\ALG}{\mathsf{ALG}}

\newcommand{\Ray}{\mathsf{Ray}}
\newcommand{\Faces}{\mathsf{Faces}}
\newcommand{\Face}{\mathsf{face}}
\newcommand{\Cand}{\mathsf{Cand}}

\newcommand{\objects}{\mathcal{O}}
\newcommand{\object}{{o}}
\newcommand{\arrange}{\mathcal{P}}
\newcommand{\pose}{p}
\newcommand{\poseset}{P}
\newcommand{\actions}{\mathcal{A}}
\newcommand{\action}{{a}}
\newcommand{\sol}{\Pi}

\newcommand{\sideI}{h}
\newcommand{\vertI}{v}
\newcommand{\chor}{\mathcal{C}_{\sideI}}
\newcommand{\cvert}{\mathcal{C}_{\vertI}}
\newcommand{\ghor}{S_{\sideI}^\uparrow}
\newcommand{\gvert}{S_{\vertI}^\rightarrow}
\newcommand{\bchor}{{\mathcal{C}}_{\sideI}^{\rightarrow}}
\newcommand{\bcvert}{{\mathcal{C}}_{\vertI}^{\uparrow}}
\newcommand{\bghor}{S_{\sideI}^{\rightarrow\uparrow}}
\newcommand{\bgvert}{S_{\vertI}^{\uparrow\rightarrow}}

\sethlcolor{yellow!20}
\newcommand{\camera}[1]{\hl{#1}}

\newtcolorbox{crd}{
    colback=blue!5,
    colframe=blue!60!black,
    boxrule=0.8pt,
    arc=3pt,
    left=8pt, right=8pt, top=8pt, bottom=8pt,
    breakable
}

\newcommand{\cameraready}[1]{{#1}}

\newtheorem{assumption}{Assumption}

\begin{abstract}
Rearranging densely packed tabletop objects is challenging when parallel-gripper picks are infeasible without sufficient clearance around an object. This work studies the problem characteristics for practically motivated settings with uniformly sized blocks placed at planar tabletop grid locations. Since purely prehensile removal can become infeasible, a directional knock primitive is therefore introduced and the optimal knock-pick variant of the problem is formulated. The work proposes a series of abstractions wherein minimal constraining gadgets are covered to identify the necessary knocks. Utilizing maximum-weight perfect matching on a graphical abstraction yields efficient polynomial time computation of the optimal plan that minimizes the number of actions. Experiments are reported for increasing grid sizes in synthetic settings as well as in IsaacSim. The theoretical observations provide a promising stepping stone towards rigorously building efficient manipulation strategies that interleave prehensile and non-prehensile actions. 

\end{abstract}
\section{Introduction}

\begin{wrapfigure}[14]{r}{0.6\textwidth}
  \vspace{-0.45in}
  \centering
  \begin{minipage}[b]{0.71\linewidth} 
    \centering
    \subfloat{\includegraphics[trim={15cm 3cm 40cm 3.7cm},clip,width=\linewidth]{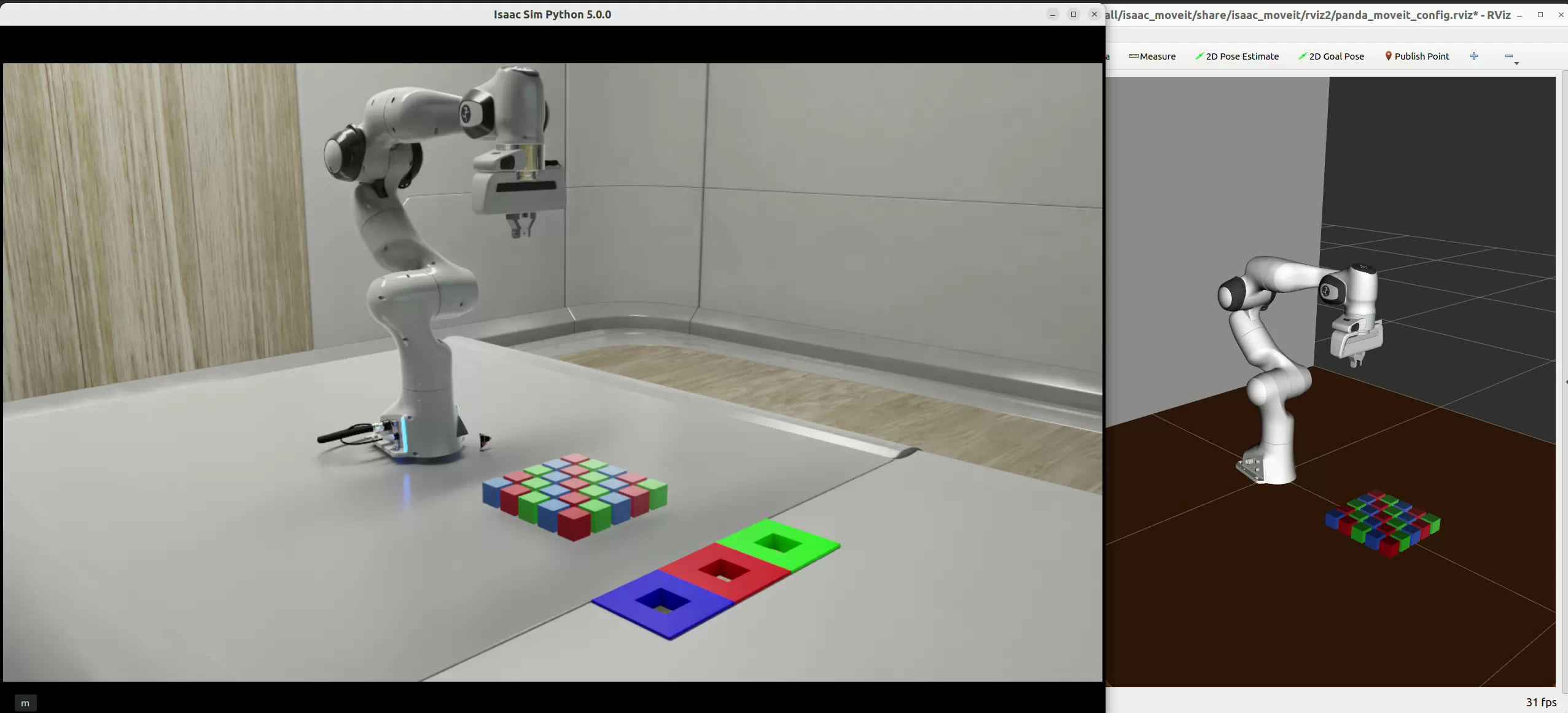}}
  \end{minipage}%
  \hfill
  \begin{minipage}[b]{0.28\linewidth}
    \centering
    \subfloat{\includegraphics[trim={28cm 12cm 51cm 20cm},clip,width=\linewidth]{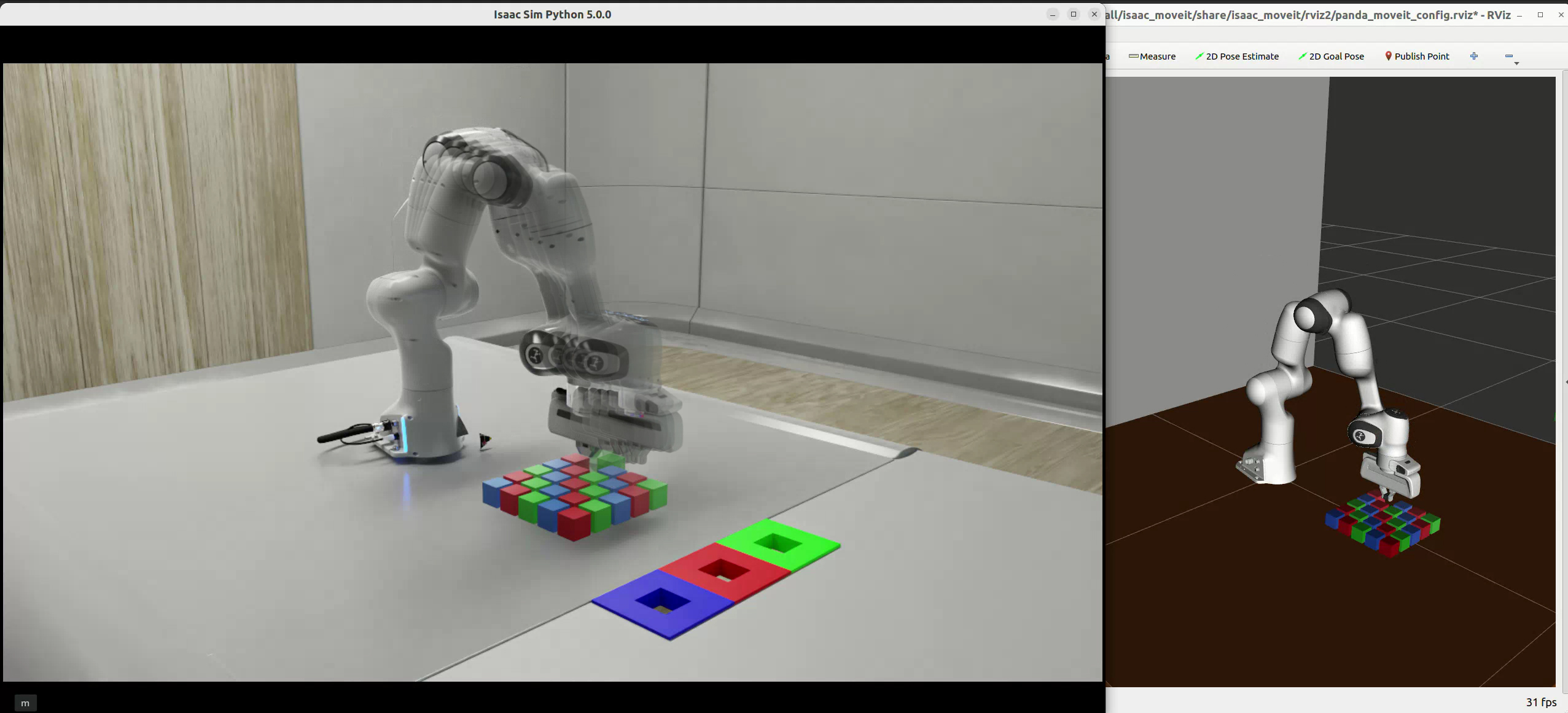}}\\
    \vspace{-10pt}
    \subfloat{\includegraphics[trim={28cm 12cm 51cm 20cm},clip,width=\linewidth]{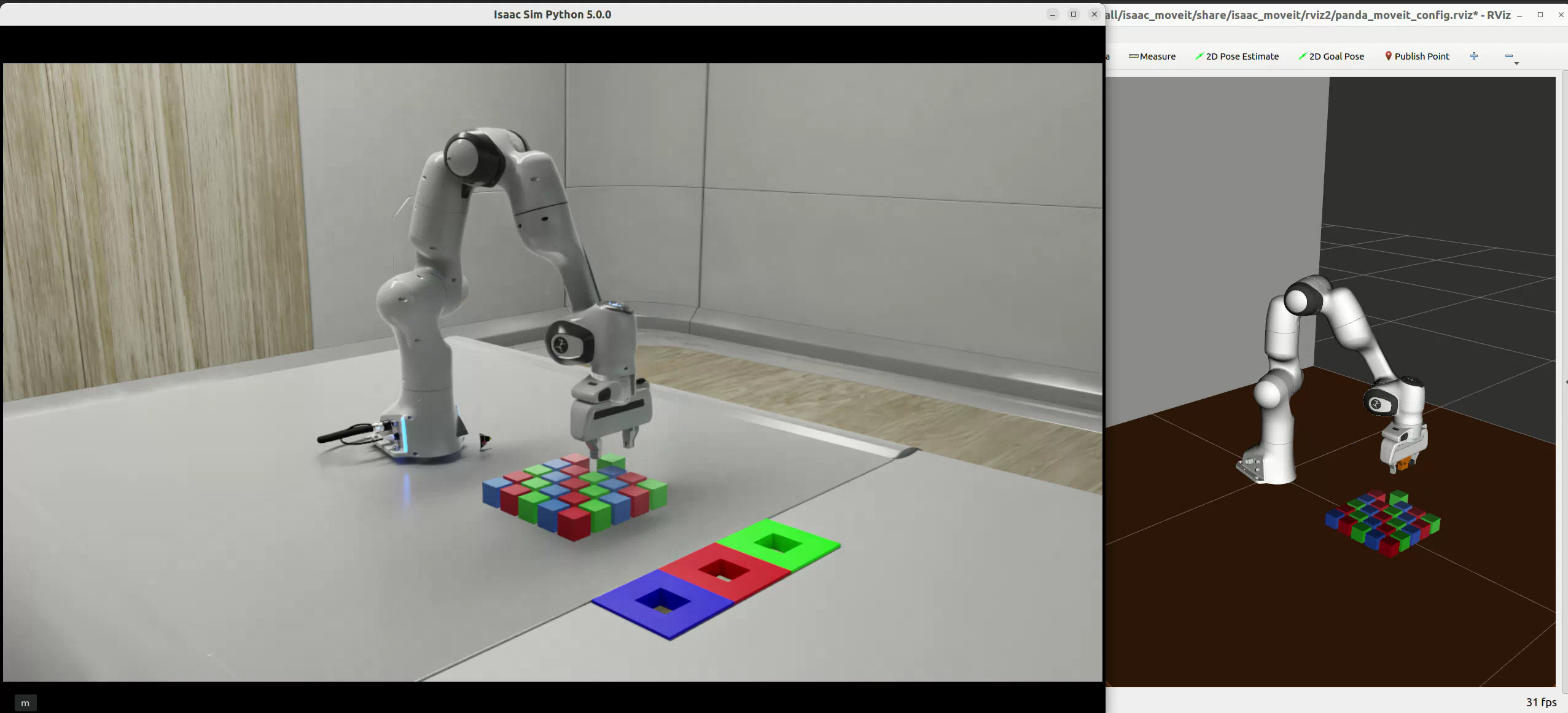}}\\
    \vspace{-10pt}
    \subfloat{\includegraphics[trim={28cm 12cm 51cm 20cm},clip,width=\linewidth]{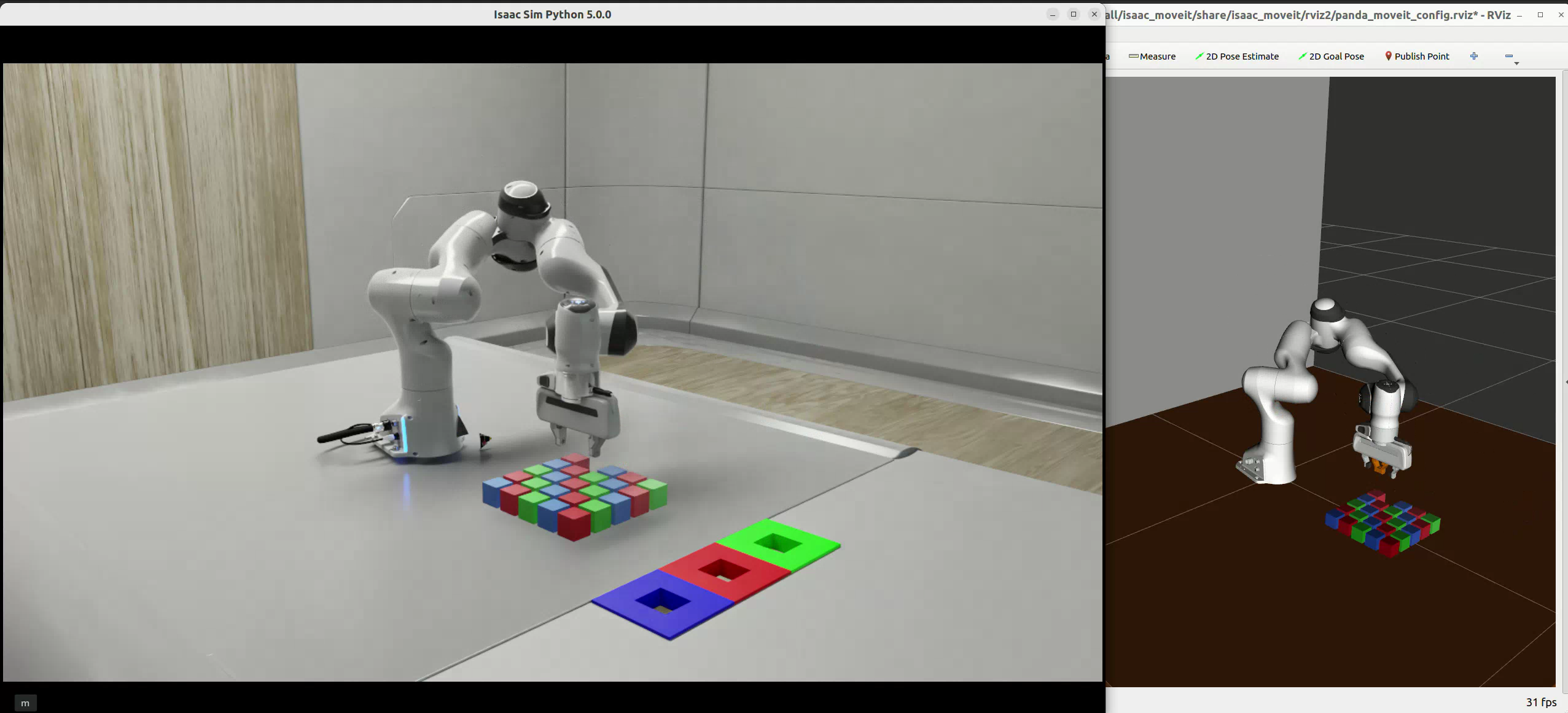}}
  \end{minipage}
  \vspace{-0.28in}
  \caption{A color sorting task with parallel grippers. \textit{(Right)} A \textit{knock} makes the subsequent \textit{picks} valid.}
  \label{fig:fig1}
  \vspace{-0.2in}
\end{wrapfigure}

Tightly packed objects are common in logistics and automation settings~\cite{shome2019towards,wang2021dense}. While object rearrangement~\cite{han2017complexity} is a well-studied problem with wide applications in automation, tightly packed settings like the sorting task in Fig~\ref{fig:fig1} expose a critical gap --- using parallel gripper picks is often infeasible when objects have no clearance. A combination of prehensile picks and some non-prehensile action is imperative. This work introduces a non-prehensile \textit{knock} action that displaces an object in a desired direction. It remains to be seen whether \textit{optimal knock--pick plans can be computed efficiently in tightly packed settings?}

Towards the broader aim, the current work focuses on a more specific setting. This work inspects the category of problems arising from using a parallel gripper to pick a set of tabletop cubical blocks that are tightly packed in a grid (Fig~\ref{fig:fig1}). Having a suction-based end-effector permits immediate feasibility with top-down picks and allows the problem to be solved and analyzed using existing techniques~\cite{han2017complexity}. In contrast, finger-based end-effectors, like parallel grippers, typically need clearance around objects to allow them to be picked. {Densely packed} arrangements often exhibit local geometric locking: surrounding objects prevent access to grasp points, and no object is immediately graspable with finger-based end-effectors. 
Such dense scenes can arise in warehouse bin clearing, tabletop packing/unpacking, and similar "blocks world" tasks, where objects lack the necessary clearance to admit pick feasibility with parallel grippers. Note that the occurrence of even a single square-like pattern of four blocks introduces infeasibility with parallel gripper picks.
It is of interest to solve and characterize the theoretical properties of sequential manipulation problems in such tightly packed scenarios.



In tightly packed tabletop settings,
\emph{non-prehensile} actions---pushing, toppling, or knocking---that create space by moving an object without a stable grasp.
While necessary in the current setting,
they come with tangible costs:
they are less reliable than grasping, more likely to cause collateral motion, and may require additional recovery steps for execution feasibility. 
In many systems, a non-prehensile action is best viewed as an \emph{expensive clearance-creation operation}, after which standard pick--place can proceed. Designing robust non-prehensile primitives is, by itself, a challenging problem and an active area of research~\cite{wen2022you,chi2023diffusion}. In the current work, the hand-crafted knock primitive is tested on IsaacSim and shows promising performance, with room for motivating improvements to robust execution by bridging the Sim-to-Real gap~\cite{hofer2021sim2real}. Despite this, the empirical results motivate the utility of optimal knock--pick plans for tightly packed blocks and thereby preserve the primary focus of the current work in characterizing optimal knock--pick planning complexity and solving strategies.


The focus of the current work on grid arrangements provides an opportunity to study the regime under simplified abstractions defined over the tabletop grid. Interestingly, the core result uncovers that \textit{computing the high-level optimal knock-pick plan can be done in \textbf{polynomial time}}. This follows from transforming the minimal knock action computation to a maximum weight perfect matching problem that yields a way to {break} every local pick constraint. Minimal constraining gadgets are identified and ensuring all objects are covered by such gadgets achieves our aim. Under, the reachable tabletop setting, the top-down primitives will be motion feasible. The current theoretical result for the optimal plan is essentially the best performance achievable given the available actions. Practical execution strategies of non-prehensile actions may introduce the need to retry or repeat actions or more advanced online adaptation and recovery strategies, which is beyond the current scope of the paper. The reported physics simulator results include rudimentary retry and resume strategies and provide promising motivating performance. 
This paper makes three contributions:
(i) a formal graph-based abstraction of tightly packed block manipulation with a parallel gripper using top-down picks and non-prehensile knock actions;
(ii) framing the problem of minimum-knock plan computation to a special case of optimal exact cover problem that can be solved in polynomial time using maximum-weight perfect matching;
and (iii) an execution routine that converts the combinatorial optimum into a knock--pick (KP)-feasible sequence that is evaluated in synthetic graphs and physics simulation in IsaacSim.

\section{Related Work}
\paragraph{Object Rearrangement.}
A large body of work addresses object rearrangement~\cite{Ota:2004mi}, with strong results for optimality and complexity~\cite{han2017complexity} in scenarios involving different number and constraints exhibited for prehensile object picks and placements. 
Notably, tightly packed block settings can be infeasible solely using prehensile parallel gripper grasps so novel solvers have to be devised beyond the scope of classical object rearrangement, including rigorous exploration of underlying combinatorial structures that can lead to effective solvers. 

\paragraph{Cluttered Manipulation}
Cluttered settings~\cite{quintero2023optimal} have also been optimized~\cite{toussaint2015logic} in task and motion planning frameworks, particularly with parallel grippers. However, the focus here is problems where low but existing clearance and objects having different heights permitted feasibility with parallel grasp picks. Other work~\cite{havur2014geometric} has focused on geometric considerations of the problem. Task and motion planning~\cite{dantam2018incremental} paradigms provide generalizability and are complementary to effective combinatorial problem abstractions.   

\paragraph{Non-prehensile Manipulation}
Non-prehensile object interaction~\cite{song2020multi,huang2019large,song2019object} has attracted sustained interest since the seminal advances in push-grasping~\cite{Ben-Shahar:1998pi,dogar2011framework}. 
Among early works,~\cite{huang2000mechanics} analyzes the mechanics of planar sliders under \emph{tapping}, while~\cite{lynch1999toppling} formalizes \emph{toppling} as a manipulation primitive and derives the associated mechanical conditions and planning considerations.
Categories of manipulation problems require a combination of prehensile and non-prehensile actions. The current problem lies in such a category. Certain topological results have led to enhanced insights into the nature of push-grasp problems~\cite{vieira2022persistent}. The current work foregoes generalization for aiming to understand theoretical optimality bounds for such problem categories. Towards this end, the current study focuses on a less general, yet practical setting of tight block arrangements. Here, non-prehensile actions are necessary and tight theoretical bounds of polynomial solvability and optimality of task planning can be devised. 

\paragraph{Packing Assemblies} Beyond clutter, tightly packed objects~\cite{wang2021dense} are recognized to be a particularly challenging category of object manipulation problems where the clearance is low to non-existent. While pipelines~\cite{shome2019towards} designed for packing have been designed to compose such tightly packed arrangements, taking apart such an arrangement forms the focus of the current work. Interestingly, packing tasks have been often addressed with suction-based end-effectors as they typically do not depend on clearance around the object like parallel grippers. The current work attempts to look into this somewhat understudied application of parallel grippers to packing tasks. There is a rich history of related planning problems with computational geometric motion space modeling of assembly planning~\cite{halperin1998general} towards designing execution pipelines~\cite{suarez2016framework,tian2024asap}. The current work is a specific investigation of theoretical optimality for tightly packed uniform geometry blocks that need both prehensile and non-prehensile actions with generalization to other geometries and integrations of robust primitive design and execution pipelines being a motivating future direction.   

\paragraph{Learning Primitives} Non-prehensile grasps can exacerbate the sim-to-real gap~\cite{hofer2021sim2real} as complex object interactions introduces considerations of contact, friction, physics, etc. Learning from demonstration~\cite{wen2022you} can play a key role in robust non-prehensile primitives. Recent advances in imitation learning methods based on using generative models~\cite{chi2023diffusion} have shown promise. Effective primitive design, though not the focus of the current paper, is essential for practical robust deployment. The parameterized knock action has been hand-crafted and advanced learned controllers can be foreseeably plugged into robust online frameworks.

\section{Problem Formulation}
\label{sec:problem}


We consider a robot rearrangement manipulator motivated by tightly packed tabletop scenes. A set of identical cubical objects is placed on a planar tabletop and arranged on a discrete axis-aligned grid. A parallel-jaw gripper can grasp an object only when sufficient lateral clearance exists along at least one grasping axis. Otherwise, non-prehensile actions such as \emph{knocking} can create clearance.

The problem is set up with a manipulator with parallel gripper end effectors and a set of $k$ objects $\object$ at an initial arrangement $\arrange_{\mathrm{init}} = ( \pose_i, \pose_i \in SE(3) )$ such that $\pose_i$ is an initial pose of $\object_i$. The manipulator can interact the objects using a set of manipulation actions $\actions = \{ \action_j \}$ that correspond to a motion of the manipulator and alter the pose of the objects. A goal arrangement is defined in terms of goal sets per object, $\arrange_{\mathrm{final}} = ( P_i, P_i \subset SE(3) )$, a feasible solution to the manipulation problem is a plan $\sol = (\action_1, \action_2, \cdots \action_\ell)$ that moves the objects from the initial arrangement to a desired goal arrangement.  

\begin{assumption}[Initial tight grid arrangement of blocks]
The objects are assumed to be all cubical blocks of identical \textit{block} geometry.
Each pose in $\arrange_{\mathrm{init}}$ lies on a tabletop grid location where the object (in isolation) can be top-down grasped using parallel grippers. The dimension of the grid is equal to the dimension of a block, i.e., there is no clearance if adjacent locations are occupied. 
\label{ass:initgrid}
\end{assumption}
\vspace{-0.5em}
The no-clearance scenario is a key motivation for devising strategies with parallel grippers, which need clearance for grasp feasibility.
\begin{assumption}[Reachable unconstrained target arrangement]
The feasibility of placing an object at a target location does not depend on the initial block arrangement, i.e., there will exist feasible reachable candidates from $\arrange_{\mathrm{final}}$. 
\label{ass:targetarr}
\end{assumption}
\vspace{-0.4em}
The target arrangement assumption is easily motivated by categories of problems where the target locations are reachable but non-overlapping with $\arrange_{\mathrm{init}}$.
\vspace{-0.4em}
\begin{assumption}[Local non-prehensile primitive]
A non-prehensile primitive (knock) is assumed to be available as an action in $\actions$ that can act locally on a block, alter the pose of the block to make it feasible to pick, and only affect the pick feasibility of the current block and a local neighborhood.
\label{ass:localknock}
\end{assumption}
\vspace{-0.4em}
A non-prehensile action is needed in $\actions$ but its locality restriction allows a systematic inspection of the underlying combinatorial structure of the problem. Intuitively, the assumption distinguishes knock actions that move a block deliberately to an adjacent region from alternative strategies that might want to move larger groups of objects simultaneously, with the current work focusing on the former. 
The remaining formulation of the problem can proceed in the context of these assumptions.
Set the tabletop grid arrangement by fixing integers $m,n\ge 1$ and define the (axis-aligned) \emph{workspace grid} on the tabletop
\[
\cH \;\triangleq\; \{(i,j)\in\Z^2:\; 0\le i \le m-1,\; 0\le j \le n-1\}.
\]

\begin{definition}[Object Location Graph]
    
An \emph{grid location graph} is defined in terms of a subset $\cB\subseteq \cH$, where each $v\in\cB$ represents one cubical object placed at grid location $v$. 
Given $\cB\subseteq \cH$, define an undirected graph
\[
\G(\cB)\;\triangleq\;(\cV,\E),\qquad
\cV=\cB,\qquad
\{u,v\}\in\E \iff \|u-v\|_1=1.
\]
\end{definition}

\begin{wrapfigure}[3]{r}{0.2\linewidth}
    \centering
    \vspace{-35pt}
    \includegraphics[width=0.95\linewidth]{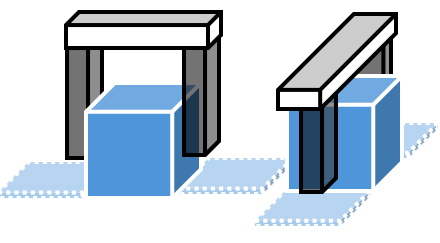}
    \vspace{-10pt}
    \caption{Parallel gripper $\Pick$.}
    \label{fig:pgpick}
    \vspace{-10pt}
\end{wrapfigure}

\subsection{Manipulation primitives}
It is necessary to model two manipulation primitives:
\emph{pick} (prehensile grasp with a parallel gripper as shown in Fig~\ref{fig:pgpick}) and \emph{knock} (non-prehensile removal along a grid-aligned direction as shown in Fig~\ref{fig:pgknock}).

\begin{wrapfigure}[13]{r}{0.2\linewidth}
    \centering
    \vspace{-20pt}
    \includegraphics[width=0.95\linewidth]{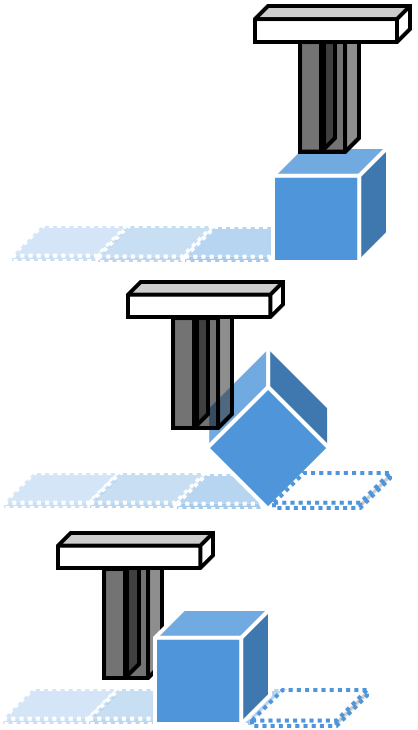}
    \vspace{-10pt}
    \caption{$\Knock$ action (to the left).}
    \label{fig:pgknock}
    \vspace{-10pt}
\end{wrapfigure}
\noindent\textbf{Pick feasibility. }
An object at a grid location $v\in\cV$ is \emph{pickable} if two adjacent locations are unoccupied in the same row or the same column. This leads to two conditions for feasibility in terms of vertex degree: (P1) $\deg_{\G}(v)\le 1$, or (P2) $\deg_{\G}(v)=2$ and its two neighbors are collinear.
We denote this predicate by $\Pick(\G,v)$.

\noindent\textbf{Knock feasibility. }
Let $\mathcal{D}\triangleq\{(\pm1,0),(0,\pm1)\}.$
For $v\in \cV $ and $d\in\mathcal{D}$, define the \emph{workspace ray}
$
\Ray(v,d)\;\triangleq\;\{v+td\in\cH:\; t\in\Z,\; t\ge 1\}.
$
A knock at $v$ in direction $d$ is valid if
\begin{equation}\label{eq:rayclear}
\Ray(v,d)\cap \cV = \emptyset.
\end{equation}
We say $v$ is \emph{knockable} if $\exists d\in\mathcal{D}$ satisfying~\eqref{eq:rayclear}, and denote this by $\Knock(\G,v)$.

A \emph{knock--pick} KP-plan for $\cB$ is a finite sequence
of actions which can be $a_t\in\{\Pick,\Knock\}$ for an object at vertex $v_t$ for each discrete step $t$.

The action at time $t$ is feasible if the corresponding predicate $\Pick(\G_t,v_t)$ 
or
$\Knock(\G_t,v_t)$ holds for all the actions sequentially across time steps.
The execution is \emph{complete} if it removes all vertices from $\G(\cB)$.

We measure cost by the number of actions in the plan.
This is equivalent to minimizing the number of knocks, since every object must be removed exactly once and picks are unavoidable whenever feasible.

\begin{definition}[Optimal KP Planning]
    Given an initial tabletop tightly packed grid arrangement of blocks represented by $\cB$, target goal arrangement $\arrange_{\mathrm{final}}$, available actions $\actions = \{\Knock, \Pick\}$, an optimal KP plan $\Pi = (\action_i)$ is a sequence of feasible actions that enact the rearrangement for all blocks such that the number of actions in $\Pi$ is minimized. 
\end{definition}

Since each object is eventually removed exactly once, minimizing the total number of actions is equivalent to minimizing the number of knocks, up to the additive constant $|\cV|$.

\FloatBarrier 

\section{Methodology}
\label{sec:method}
In this section the graphical abstractions to represent the problem will be proposed and solution strategies built on top of the combinatorial structures these abstractions will express.
Our approach proceeds in four conceptual steps:
(i) remove all immediately pickable objects by exhaustive picking (\emph{cleanup}),
(ii) characterize the remaining non-pickable structure via three minimal gadgets and a face system,
(iii) compute an \emph{optimal exact face cover} by reducing to a maximum-weight perfect matching problem, and
(iv) generate a \emph{KP-feasible} execution by iterating ray-feasible knocks and cleanup picks.

\begin{figure}[t]
    \centering
    \includegraphics[width=1\linewidth]{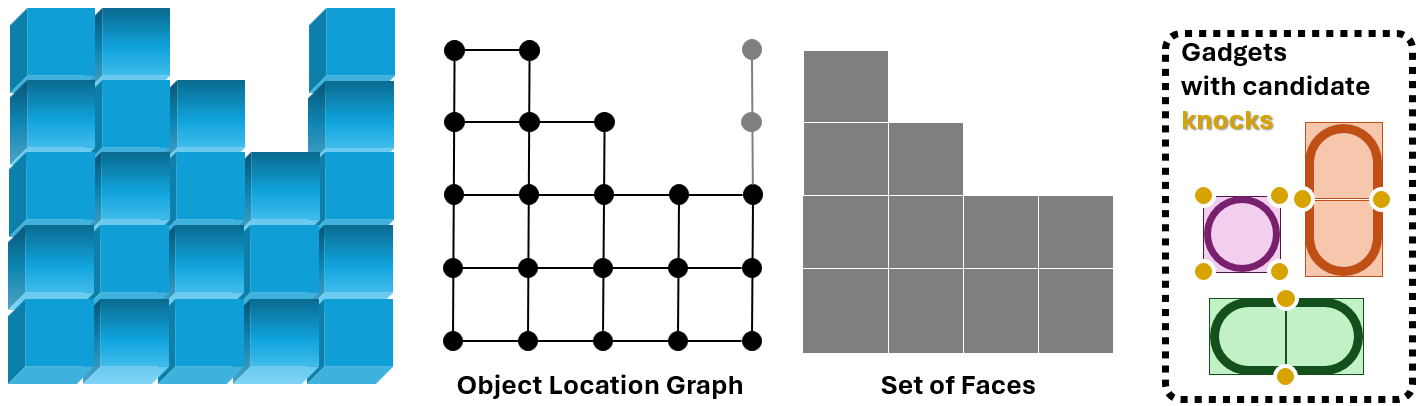}
    \vspace{-15pt}
    \caption{\textit{(Left)} Tabletop grid arrangement of blocks. \textit{(Second)} Object location graph, which greyed out ones indicating blocks that can be immediately picked (within $\Clean$). \textit{(Third)} A face is a minimal constraining set of four objects arranged in a $2\times 2$ square in the object location graph. The object location graph corresponds to a set of faces. \textit{(Right)} Gadgets are defined in terms of faces alongside knock candidate objects, knocking one of which can help pick the remaining objects in the gadget. }
    \label{fig:graphstructs}
    \vspace{-10pt}
\end{figure}
\begin{definition}[Cleaned Graph]
Given a graph $\G=(\cV,\E)$, define $\Clean(\G)$ as the graph obtained by repeatedly deleting any pickable vertex until none remains.
We call $\G$ \emph{cleaned} if $\G=\Clean(\G)$.
\end{definition}
In what follows, we write $\G_0\triangleq \G(\cB)$ for the initial grid graph and
$
\G^\circ \;\triangleq\; \Clean(\G_0)
$
for its cleaned graph.

\subsection{Faces and Gadgets}

We define three vertex-induced gadgets that represent minimal non-pickable patterns (illustrated in Fig~\ref{fig:graphstructs}).
Each gadget corresponds to either one or two \emph{unit faces}. 
A \emph{unit face} (or simply \emph{face}) is a $2\times 2$ fully occupied square.

\begin{definition}[Faces]\label{def:faces}
Given $\G(\cB)$, define the set of \emph{unit faces}
\[
\Faces(\G)\;\triangleq\;\Bigl\{(i,j)\in\Z^2:\;
\{(i,j),(i+1,j),(i,j+1),(i+1,j+1)\}\subseteq \cV(\G)\Bigr\}.
\]
Each $(i,j)\in\Faces(\G)$ identifies the corresponding $2\times 2$ vertex set. We denote this face by $\Face(i,j)$.
\end{definition}


\begin{definition}[Gadgets]\label{def:gadgets}
Let $(i,j)\in\Z^2$.

\noindent\textbf{Square gadget ($\mathbf{2\times 2}$).}
$
Q(i,j)\triangleq \{(i,j),(i+1,j),(i,j+1),(i+1,j+1)\}.
$
This corresponds to exactly one face $\Face(i,j)$.

\noindent\textbf{Vertical gadget ($\mathbf{3\times 2}$).}
$
R_v(i,j)\triangleq \{(i+r,j+c): r\in\{0,1,2\},\, c\in\{0,1\}\}.
$
This contains exactly two faces: $\Face(i,j)$ and $\Face(i+1,j)$.

\noindent\textbf{Horizontal gadget ($\mathbf{2\times 3}$).}
$
R_h(i,j)\triangleq \{(i+r,j+c): r\in\{0,1\},\, c\in\{0,1,2\}\}.
$
This contains exactly two faces: $\Face(i,j)$ and $\Face(i,j+1)$.
\end{definition}


To destroy \emph{two} faces with \emph{one} knock in a $2\times3$ or $3\times2$ gadget, the knocked vertex must lie on the shared boundary.
This yields the following candidate sets.

\begin{definition}[Knock-candidate vertices of a gadget]\label{def:cand}
Let $S$ be a gadget occurrence present in the current graph, and let $(i_0,j_0)$ be its anchor
(the lexicographically minimum coordinate in $S$).
Define $\Cand(S)\subseteq S$ by gadget type:
\begin{enumerate}[label=(\roman*)]
\item If $S=Q(i_0,j_0)$ is a $2\times2$ gadget, then $\Cand(S)=S$. 
\item If $S=R_h(i_0,j_0)$ is a $2\times3$ gadget, then
\[
\Cand(S)=\{(i_0,j_0+1),(i_0+1,j_0+1)\},
\]
i.e., the two vertices in the \emph{middle column} (shared boundary of the faces).
\item If $S=R_v(i_0,j_0)$ is a $3\times2$ gadget, then
\[
\Cand(S)=\{(i_0+1,j_0),(i_0+1,j_0+1)\},
\]
i.e., the two vertices in the \emph{middle row} (shared boundary of the faces).
\end{enumerate}
\end{definition}


\begin{lemma}[Gadgets are non-pickable]\label{lem:gadget-nonpick}
Let $X\in\{Q(i,j),R_v(i,j),R_h(i,j)\}$ be fully occupied and let $\G_X$ be the induced subgraph on $X$.
Then $\G_X$ has no pickable vertex: $\Pick(\G_X,v)$ is false for all $v\in X$.
\end{lemma}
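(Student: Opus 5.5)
Since $\Pick(\G_X,v)$ depends only on the degree of $v$ in the induced grid graph and, when that degree is $2$, on whether its two neighbors are collinear, the plan is a direct case analysis over the vertices of $X$. I will show that every $v\in X$ has $\deg_{\G_X}(v)\ge 2$. I will also show that whenever the degree is exactly $2$, the two neighbors lie in orthogonal directions, i.e.\ one horizontal and one vertical. Then both (P1) and (P2) fail for every vertex.

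\emph{Square gadget.} For $X=Q(i,j)$, each of the four vertices is a corner of the $2\times 2$ square. It is adjacent (at $\ell_1$-distance $1$) to exactly two other vertices of $X$, and these differ from $v$ in different coordinates. For example, $(i,j)$ has neighbors $(i+1,j)$ and $(i,j+1)$, which are not collinear with each other through $v$. So the degree is $2$ with non-collinear neighbors, and (P1) and (P2) both fail.

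\emph{Rectangular gadgets.} For $X=R_h(i,j)$, I split the six vertices into the four corners $(i+r,j+c)$ with $c\in\{0,2\}$ and the two middle-column vertices $(i+r,j+1)$.
\begin{itemize}
\item Each corner has exactly one horizontal neighbor (in column $j+1$) and one vertical neighbor (the other row in the same column). Its degree is therefore $2$ with orthogonal neighbors, so it is non-pickable as in the square case.
\item Each middle vertex has two horizontal neighbors, in columns $j$ and $j+2$, and one vertical neighbor, giving degree $3$. Both (P1) and (P2) fail.
\end{itemize}
The case $X=R_v(i,j)$ follows by the symmetric argument with rows and columns exchanged, or simply by transposing coordinates, which preserves $\ell_1$-adjacency and collinearity.

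The only point needing care is reading (P2) correctly. ``Two neighbors collinear'' means the neighbors lie on opposite sides of $v$ in the same row or column, so that the gripper can close along that axis. Orthogonal neighbors block both axes. With that interpretation fixed, the verification is routine and no real obstacle is expected. The argument also requires nothing about vertices outside $X$, since the lemma concerns the induced subgraph $\G_X$ alone.
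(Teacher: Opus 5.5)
Your proof is correct and is essentially the paper's argument. Both show that square corners and rectangle corners have degree $2$ with orthogonal (non-collinear) neighbors, and that the two middle vertices of a $2\times3$ or $3\times2$ rectangle have degree $3$, so neither (P1) nor (P2) holds; you simply spell out $R_h$ coordinate by coordinate and obtain $R_v$ by transposition.
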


\begin{proof}
In a $2\times2$ square, each vertex has degree $2$ and its two neighbors are non-collinear, so (P2) fails.
In a $3\times2$ (resp.\ $2\times3$) rectangle, corners again have degree $2$ with non-collinear neighbors, while the two interior boundary vertices have degree $3$.
Thus no vertex satisfies (P1) or (P2).
\end{proof}

\begin{lemma}[One knock suffices for each gadget]\label{lem:gadget-oneknock}
Let $X$ be fully occupied, where $X\in\{Q(i,j),R_v(i,j),R_h(i,j)\}$.
There exists a single vertex deletion (a knock) after which exhaustive picking removes all remaining vertices in $X$.
\end{lemma}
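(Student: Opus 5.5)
The proof is a direct case check on the three gadget types, working in the induced subgraph $\G_X$ as in Lemma~\ref{lem:gadget-nonpick}. For each type I will name one vertex to delete, chosen from $\Cand(X)$ of Definition~\ref{def:cand} so the witness is consistent with later use. I will then exhibit an explicit order of picks and check (P1) or (P2) at each step. Only pick feasibility inside $\G_X$ matters here, so ray-clearance of the knock is not part of the claim; I will remark on this at the end.

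\textbf{Square gadget $Q(i,j)$.} Delete any vertex, say $(i,j)$. The remaining three vertices form an L-shaped path $(i+1,j)-(i+1,j+1)-(i,j+1)$. Both endpoints have degree $1$, so they are pickable by (P1). After picking one endpoint, the other two vertices each have degree $\le 1$ and can be picked in turn.

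\textbf{Horizontal gadget $R_h(i,j)$.} Delete the candidate $(i,j+1)$. The remaining five vertices are $(i,j)$, $(i,j+2)$, $(i+1,j)$, $(i+1,j+1)$, $(i+1,j+2)$. The vertex $(i,j)$ has only the neighbor $(i+1,j)$, and $(i,j+2)$ has only the neighbor $(i+1,j+2)$, so both have degree $1$ and are pickable. Picking them leaves the horizontal path $(i+1,j),(i+1,j+1),(i+1,j+2)$. Its middle vertex has two collinear neighbors, so (P2) holds. Its endpoints have degree $1$, so (P1) holds, and the path is cleared by picking endpoints repeatedly.

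\textbf{Vertical gadget $R_v(i,j)$.} This case is the transpose of the horizontal one. Delete the middle-row candidate $(i+1,j)$ and argue symmetrically.

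The only delicate point is checking that each intermediate configuration really has a pickable vertex. On at most five vertices this is immediate. Picking vertices only lowers degrees, and every intermediate graph here is a path, so its endpoints are always pickable by (P1). This observation closes the induction on the number of remaining vertices in each case.
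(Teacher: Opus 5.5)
Your proof is correct and follows the same route as the paper. In both arguments you delete one vertex (any vertex of $Q(i,j)$, or a middle-column/middle-row hinge of $R_h$/$R_v$) so that the remaining induced subgraph is acyclic, and then clear it by repeatedly picking degree-$\le 1$ vertices via (P1); your version simply makes the paper's ``acyclic structure with leaves'' explicit as a path with a concrete pick order.
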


\begin{proof}
For $Q(i,j)$, knocking any one vertex leaves a 3-vertex tree, which is removable by leaf picking.
For $R_v(i,j)$ and $R_h(i,j)$, knocking an appropriate "hinge" vertex breaks both faces at once, leaving an acyclic structure with leaves.
Exhaustive picking then removes all remaining vertices.
\end{proof}

\subsection{Exact Covers with Gadgets}
Let $\G^\circ\triangleq \Clean(\G_0)$ denote the cleaned graph.
Define the universe as the faces in $\G^\circ$:
$
\cU \;\triangleq\; \Faces(\G^\circ).
$
For each gadget in $\G^\circ$, define a set of faces it covers:
\begin{itemize}[leftmargin=*]
\item If $Q(i,j)\subseteq V(\G^\circ)$ then include $S=\{\Face(i,j)\}$.
\item If $R_v(i,j)\subseteq V(\G^\circ)$ then include $S=\{\Face(i,j),\Face(i+1,j)\}$.
\item If $R_h(i,j)\subseteq V(\G^\circ)$ then include $S=\{\Face(i,j),\Face(i,j+1)\}$.
\end{itemize}
Let $\cS$ be the family of all such sets. Each $S\in\cS$ has size $1$ or $2$.

{

The current problem then needs to describe a set of such gadgets that covers all faces. Minimizing the cardinality of this cover correspondingly minimizes the number of actions in the original problem. 
}


\begin{definition}[Optimal exact face cover]\label{def:exact-cover}
An \emph{exact cover} is a subfamily $\cC\subseteq \cS$ such that
(i) $\bigcup_{S\in\cC} S=\cU$ and (ii) the sets in $\cC$ are pairwise disjoint:
$S\cap S'=\emptyset$ for all distinct $S,S'\in\cC$.
We seek an optimal exact cover
\[
\cC^\star \;\in\; \arg\min_{\cC\subseteq \cS}\; |\cC|
\quad \text{s.t. $\cC$ is an exact cover of $\cU$.}
\]
\end{definition}

\paragraph{Existence.}
An exact cover always exists for system $(\cU, \cS)$: for every face $f\in\cU$, the square gadget supplies a singleton set $\{f\}\in\cS$, so selecting all singletons yields an exact cover.


\begin{lemma}[Any face requires a knock]\label{lem:face-needs-knock}
Let $\G$ be cleaned (i.e., $\G=\Clean(\G)$) and let $\Face(i,j)\in\Faces(\G)$.
Then in any KP execution, the first removal of a vertex from $Q(i,j)$ must be a knock.
\end{lemma}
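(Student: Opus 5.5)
The plan is to argue by monotonicity: deleting vertices can only lower degrees, and as long as all four vertices of $Q(i,j)$ are present, each of them keeps two occupied neighbours that are non-collinear. Fix any KP execution $(a_1,v_1),(a_2,v_2),\dots$ starting from $\G$, with current graphs $\G_t$. Let $t^\ast$ be the first step at which $v_{t^\ast}\in Q(i,j)$. Before $t^\ast$ no vertex of $Q(i,j)$ has been removed, so $Q(i,j)\subseteq\cV(\G_{t^\ast})$. It suffices to show that $\Pick(\G_{t^\ast},v_{t^\ast})$ is false. Feasibility of the step then forces $a_{t^\ast}=\Knock$.

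\textbf{Step 1 (local obstruction).} Take any $v\in Q(i,j)$ and any graph $H$ with $Q(i,j)\subseteq\cV(H)$. Inside the square, $v$ has exactly two neighbours: one in its row and one in its column. So $\deg_H(v)\ge 2$, and (P1) fails. If $\deg_H(v)=2$, then its only neighbours are these two, which are non-collinear, so (P2) fails. If $\deg_H(v)\ge3$, then (P2) fails trivially. Hence $\Pick(H,v)$ is false. This is essentially the argument of Lemma~\ref{lem:gadget-nonpick}, except that it is applied in the ambient graph $H$ rather than the induced subgraph. The extra vertices of $H$ only raise the degree, so the argument is unaffected.

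\textbf{Step 2 (apply at $t^\ast$).} Apply Step 1 with $H=\G_{t^\ast}$ and $v=v_{t^\ast}$. This shows the action at $t^\ast$ cannot be a pick, so it must be a knock. The cleanedness hypothesis is not really needed for this local argument. It mainly guarantees that the statement is non-vacuous, since the face survives cleaning. Note also that such a $t^\ast$ exists whenever the execution is complete.

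\textbf{Main obstacle.} The only delicate point is that $\Pick$ is defined by degree conditions, which are not literally monotone. A vertex of degree 3 could drop to degree 2 with collinear neighbours and so become pickable. Step 1 handles this directly. As long as the two square-neighbours remain, any degree-2 configuration consists of exactly those two neighbours. Those neighbours are non-collinear, so no sequence of deletions outside $Q(i,j)$ can make a square vertex pickable.
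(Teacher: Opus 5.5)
Your proof is correct and uses the same idea as the paper's: while $Q(i,j)$ is intact, each of its vertices keeps an occupied row-neighbour and an occupied column-neighbour, so the argument of Lemma~\ref{lem:gadget-nonpick} rules out a pick at the first removal from the square. Your version is more careful, because you apply that argument in the ambient graph $\G_{t^\ast}$ at the actual step of first removal. The paper instead appeals to cleanedness (``no vertex is pickable at this stage''), which only speaks to the initial graph and, as you observe, is not needed.
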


\begin{proof}
By Lemma~\ref{lem:gadget-nonpick}, no vertex of $Q(i,j)$ is pickable while the face is intact.
Since $\G$ is cleaned, no vertex is pickable at this stage, so the first removal from the face cannot be a pick and must be a knock.
\end{proof}

\begin{lemma}[A knock destroys at most two faces]\label{lem:knock-two-faces}
In a grid-induced graph, deleting one vertex destroys at most two faces of $\Faces(\cdot)$.
Moreover, it destroys two faces iff the deleted vertex belongs to two adjacent faces, i.e., it is a shared corner of a fully occupied $2\times3$ or $3\times2$ gadget.
\end{lemma}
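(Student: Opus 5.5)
The plan is to read ``deleting one vertex'' in the statement as the effect of a feasible $\Knock$, since that is the only way a vertex is removed while it still sits in a face (Lemma~\ref{lem:gadget-nonpick}). Knock feasibility is what makes the bound true. For an arbitrary vertex deletion it fails: the centre of a fully occupied $3\times 3$ block lies in four faces. So the first step is to record the combinatorics of faces around a vertex. A vertex $v=(i,j)$ can belong only to the four faces $\Face(i-1,j-1)$, $\Face(i-1,j)$, $\Face(i,j-1)$, $\Face(i,j)$, one for each diagonal quadrant around $v$. Each such face contains exactly two of the four grid neighbours $v\pm(1,0)$ and $v\pm(0,1)$, namely the two bounding its quadrant. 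Deleting $v$ destroys exactly those faces that contain $v$, and leaves all other faces unchanged.

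The second step uses knock feasibility. If $\Knock(\G,v)$ holds with direction $d\in\mathcal{D}$, then $\Ray(v,d)\cap\cV=\emptyset$. Hence the neighbour $v+d$ is either unoccupied or outside $\cH$, and in both cases it is not a vertex of $\G$. Exactly two of the four quadrant faces around $v$ contain $v+d$, so neither of them can be a face of $\G$. At most the two faces in the quadrants on the $-d$ side survive, which gives the bound of two.

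For the characterization, suppose exactly two faces are destroyed. Two faces containing $v$ can share an edge, or they can meet only at $v$ (diagonally opposite quadrants). The opposite-quadrant case must be excluded. Under a knock it cannot occur, because the two remaining candidate faces are both on the $-d$ side and are adjacent. They share the edge $\{v,v-d\}$, so their union is six occupied cells forming a fully occupied $2\times 3$ (if $d$ is vertical) or $3\times 2$ (if $d$ is horizontal) gadget. In that gadget $v$ lies on the shared boundary, in the middle column or middle row, i.e.\ $v\in\Cand(S)$ as in Definition~\ref{def:cand}. Conversely, if $v$ is a shared-boundary vertex of a fully occupied $R_h$ or $R_v$ gadget, then $v$ lies in both of its faces, and deleting $v$ destroys both. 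Together with the upper bound, exactly two faces are destroyed.

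The main obstacle is the interpretation issue. The literal statement for arbitrary vertex deletion is false, so the proof must explicitly invoke the ray condition~\eqref{eq:rayclear} in the knock model. Along the way it must also rule out diagonal pairs of faces, which occur for general deletions but never for knocks. I would state this restriction at the start of the proof, noting that it is exactly the situation in which Lemma~\ref{lem:face-needs-knock} and the cover argument apply the lemma.
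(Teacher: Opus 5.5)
Your proposal is correct and follows essentially the same route as the paper: both use the ray condition to force a missing neighbour of the knocked vertex. That missing neighbour rules out the two quadrant faces containing it and leaves at most two faces, which are necessarily adjacent and form a $2\times 3$ or $3\times 2$ gadget. Your write-up is more explicit than the paper's ``degree at most $3$'' phrasing, notably in excluding diagonally opposite face pairs and in noting that the bound fails for an arbitrary non-knockable deletion, such as the centre of a full $3\times 3$ block.
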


\begin{proof}
A face is a $2\times2$ set of vertices, and a vertex participates in a face only as a corner.
If a vertex is knockable under~\eqref{eq:rayclear}, it has at least one missing incident neighbor, hence degree at most $3$.
In the grid, a degree-$\le 3$ vertex can be a corner of at most two fully occupied $2\times2$ faces (those using the two perpendicular present edges adjacent to the missing direction).
Thus deleting one vertex destroys at most two faces.
Two faces are destroyed exactly when the vertex is shared by two faces, which occurs precisely in the fully occupied $2\times3$ or $3\times2$ pattern.
\end{proof}



\subsection{Optimal Exact Cover via Maximum Weight Perfect Matching}

\begin{figure}[t]
    \centering
    \includegraphics[width=0.8\linewidth]{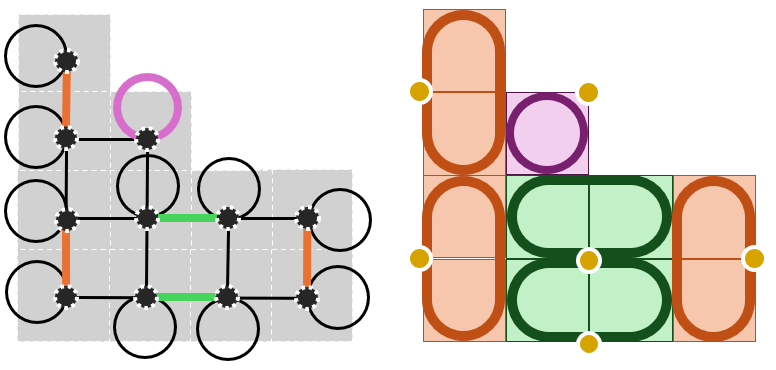}\\
    \includegraphics[width=0.95\linewidth]{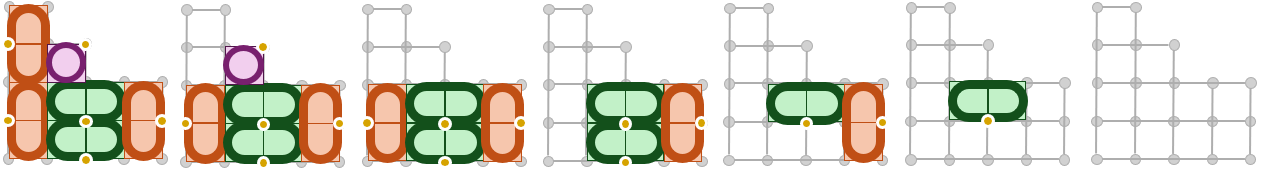}
    \vspace{-10pt}
    \caption{
    \textbf{(Top)} Face-graph abstraction (left) and the resulting optimal gadget cover (right).
    \textit{(Top-left)} Each vertex represents a unit face in the cleaned instance. A self-loop at a vertex represents a square ($2\times2$) gadget covering that single face; an edge between two vertices represents a horizontal ($2\times3$) or vertical ($3\times2$) gadget covering the corresponding adjacent face pair. Highlighted edges (including loops) show the maximum-weight perfect matching solution, which induces an optimal exact face cover.
    \textit{(Top-right)} The corresponding optimal gadget cover on the object graph (each vertex represents an object); highlighted object vertices indicate knock-candidate locations with at least one feasible knock direction in a KP-feasible execution.
    \textbf{(Bottom)} An illustration of a KP feasible plan corresponding to a sequence of gadgets knocked and cleaned till all objects are handled.
    }
    \label{fig:defgadget}
    \vspace{-10pt}
\end{figure}
Set cover itself is an NP-hard problem. However, the set system $(\cU,\cS)$ is a special case of set cover in which every set has size $1$ or $2$ and every element admits a singleton set.
This special structure lets us solve the exact set cover \emph{optimally} in polynomial time via maximum-weight perfect matching (see Fig~\ref{fig:defgadget}).

\begin{definition}[Face-adjacency graph with loops]\label{def:face-adj-graph}
Define the \emph{face-adjacency graph} $\cG_F$ as follows.
Its vertex set is $\cU=\Faces(\G^\circ)$.
For two distinct faces $f,g\in\cU$, add an undirected edge $\{f,g\}$ iff $f$ and $g$ are adjacent faces
(i.e., they share a grid edge) and the corresponding $2\times3$ or $3\times2$ gadget is present in $\G^\circ$.
Additionally, add a self-loop $(f,f)$ at every $f\in\cU$ corresponds to $2\times 2$ gadgets.
Assign weights
\[
w(f,f)=0
\quad\text{and}\quad
w(f,g)=1 \;\;\text{for all distinct adjacent $f,g$}.
\]
\end{definition}

\paragraph{Perfect matchings with loops.}
A \emph{perfect matching} of $\cG_F$ is a set of edges $\cM\subseteq E(\cG_F)$ such that every vertex $f\in\cU$ is incident to \emph{exactly one} edge in $\cM$, where a loop $(f,f)$ counts as incident to $f$.


\begin{lemma}[Exact covers $\leftrightarrow$ perfect matching]\label{lem:exactcover-matching}
There is a bijection between exact covers $\cC\subseteq\cS$ of $\cU$ and perfect matching $\cM$ of $\cG_F$. Under this bijection, each chosen singleton set $\{f\}$ corresponds to the loop $(f,f)$, and each chosen 2-face set $\{f,g\}$ corresponds to the edge $\{f,g\}$.
\end{lemma}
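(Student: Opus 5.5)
I will prove the statement by writing down explicit maps in both directions and checking that they are mutually inverse. The starting observation is that the construction of $\cG_F$ in Definition~\ref{def:face-adj-graph} matches the family $\cS$ one-to-one. Each set $S\in\cS$ is a singleton $\{f\}$ from a square gadget $Q(i,j)\subseteq V(\G^\circ)$, or a pair $\{f,g\}$ of adjacent faces from a $2\times3$ or $3\times2$ gadget present in $\G^\circ$. Each edge of $\cG_F$ is a loop $(f,f)$ for $f\in\cU$, or an edge $\{f,g\}$ between adjacent faces whose gadget is present. So I first define the map $\phi:\cS\to E(\cG_F)$ by $\phi(\{f\})=(f,f)$ and $\phi(\{f,g\})=\{f,g\}$, and check that it is a bijection.

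For singletons, $f\in\cU$ means $Q(i,j)\subseteq V(\G^\circ)$, so $\{f\}\in\cS$ exactly when the loop exists. For pairs, the gadget $R_h(i,j)$ (resp.\ $R_v(i,j)$) is exactly the union of the two vertex sets $Q(i,j)\cup Q(i,j+1)$ (resp.\ $Q(i,j)\cup Q(i+1,j)$). Hence the presence of the gadget in $\G^\circ$ is equivalent to both faces lying in $\cU$ while sharing a grid edge. A pair of distinct faces determines at most one such gadget, so distinct sets give distinct edges and $\phi$ is injective. Surjectivity follows directly from the definitions.

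Next I lift $\phi$ to subfamilies: a subfamily $\cC\subseteq\cS$ maps to the edge set $\cM=\phi(\cC)$. Because $\phi$ is a bijection on elements, this lift is a bijection between subsets of $\cS$ and subsets of $E(\cG_F)$. It then remains to show that $\cC$ is an exact cover of $\cU$ if and only if $\phi(\cC)$ is a perfect matching. The key identity is that, for every $f\in\cU$ and $S\in\cS$, $f\in S$ holds exactly when $f$ is incident to $\phi(S)$, with a loop counted as incident to its single vertex. Hence the number of sets of $\cC$ containing $f$ equals the number of edges of $\phi(\cC)$ incident to $f$. Condition (i) of Definition~\ref{def:exact-cover} says this number is at least one, and condition (ii) says it is at most one. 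Together they say it is exactly one for every $f$, which is the definition of a perfect matching with loops.

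The only step needing care is the injectivity check in the second paragraph: that two distinct adjacent faces correspond to a unique gadget, so that no two sets of $\cS$ collapse to the same edge. I will settle this by noting that the shared grid edge fixes the orientation (horizontal versus vertical) and the anchor. Everything else is bookkeeping. Finally I will note that $|\cC|$ equals the number of loops plus the number of non-loop edges in $\cM$, which is $|\cU|-w(\cM)$. This prepares the optimization step that follows.
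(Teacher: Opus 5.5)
Your proof is correct and follows essentially the same route as the paper: identify each set in $\cS$ with the corresponding loop or adjacency edge of $\cG_F$, then observe that coverage and pairwise disjointness of $\cC$ translate exactly into every face being incident to exactly one chosen edge. Your version is more careful than the paper's in two places. The paper takes for granted that this set-to-edge correspondence is itself a bijection, which you justify for every edge of $\cG_F$ via $R_h(i,j)=Q(i,j)\cup Q(i,j+1)$ and its vertical analogue. The paper also does not check that the forward and backward maps are mutually inverse.
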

\begin{proof}
Given an exact cover $\cC$, every $f\in\cU$ belongs to exactly one chosen set.
Map that set to the corresponding loop (if singleton) or adjacency edge (if a pair). The disjointness of $\cC$ implies no two mapped edges share a vertex, and coverage implies each vertex is incident to exactly one mapped edge. Hence we obtain a perfect matching. Conversely, given a perfect matching, map each loop/edge to its singleton/pair set. The matching property yields the exactness of the cover and vertex coverage yields $\bigcup \cC=\cU$.
\end{proof}



\begin{lemma}[Optimal exact cover via maximum weighted perfect matching] \label{lem:optexactcover-matching}
The minimum exact cover $\cC^*\subseteq\cS$ of $\cU$ corresponds to a maximum weighted perfect matching $\cM^*$ of $\cG_F$. 
\end{lemma}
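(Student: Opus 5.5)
The plan is to transport the optimization through the bijection of Lemma~\ref{lem:exactcover-matching} and show that, under it, cover size and matching weight are related by an affine, strictly decreasing identity. Fix an exact cover $\cC$ and let $\cM$ be its image matching. Write $s(\cC)$ for the number of singleton sets in $\cC$ and $p(\cC)$ for the number of two-face sets. Then $\cM$ contains exactly $s(\cC)$ loops and $p(\cC)$ non-loop edges. By the weights of Definition~\ref{def:face-adj-graph}, this gives $w(\cM)=p(\cC)$.

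Next I count faces. Because $\cC$ is exact, every face of $\cU$ lies in exactly one chosen set, so
\[
|\cU| = s(\cC) + 2\,p(\cC).
\]
Since $|\cC| = s(\cC)+p(\cC)$, combining the two identities yields
\[
|\cC| = |\cU| - p(\cC) = |\cU| - w(\cM).
\]
The quantity $|\cU|=|\Faces(\G^\circ)|$ depends only on the cleaned instance. It does not depend on the choice of cover or matching.

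Now I conclude. The bijection of Lemma~\ref{lem:exactcover-matching} ranges over all exact covers and all perfect matchings. Therefore
\[
\min_{\cC}|\cC| = |\cU| - \max_{\cM} w(\cM),
\]
and $\cC$ attains the minimum iff its image $\cM$ attains the maximum weight. In particular, the image of $\cC^\star$ is a maximum-weight perfect matching $\cM^\star$. Conversely, the preimage of any maximum-weight perfect matching is an optimal exact cover.

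Both optima exist. Exact covers exist by the existence paragraph (all loops), so the feasible sets are nonempty, and they are finite.

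The only delicate point is the bookkeeping of loops in the face count. A loop covers one face while an ordinary edge covers two, and this must agree with the incidence convention for perfect matchings with loops. I will state explicitly that a loop contributes weight $0$ and covers exactly one vertex, so the count $|\cU|=s+2p$ is valid.

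A related practical remark applies if a standard matching solver is used. Loops can be modeled by adding a private twin vertex $f'$ for each face $f$, joined to $f$ by a weight-$0$ edge, with the twins pairwise connected by weight-$0$ edges. This is not needed for the lemma itself.
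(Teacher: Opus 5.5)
Your proposal is correct and follows essentially the same route as the paper. Under the bijection of Lemma~\ref{lem:exactcover-matching}, you count $p$ non-loop edges and $|\cU|-2p$ loops, which gives $w(\cM)=p$ and $|\cC|=|\cU|-w(\cM)$, so minimizing cover size is the same as maximizing matching weight; you also state the converse direction and the existence of both optima explicitly, which the paper leaves implicit.
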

\begin{proof}
For any perfect matching $\cM$, let $p(\cM)$ be the number of non-loop edges in $\cM$.
Because $\cM$ covers all vertices exactly once, the number of loops is $|\cU|-2p(\cM)$, hence its weight is
\[
w(\cM)=1\cdot p(\cM) + 0\cdot(|\cU|-2p(\cM)) = p(\cM).
\]
Therefore maximizing $w(\cM)$ is equivalent to maximizing $p(\cM)$, i.e., using as many 2-face gadgets as possible.
Since any exact cover $\cC$ satisfies $|\cC|=p(\cM)+(|\cU|-2p(\cM)) = |\cU|-p(\cM)$ under the bijection,
a maximum-weight perfect matching $\cM^*$ yields a minimum-cardinality exact cover $\cC^\star$.
\end{proof}

\noindent\textbf{Polynomial-time solvability: }
Maximum-weight matching in general graphs is solvable in polynomial time by Edmonds' matching framework~\cite{galil1986efficient}. Here, $\cG_F$ always admits a perfect matching because every vertex has a loop available.



\subsection{Optimal knock minimization and KP-feasibility}\label{sec:opt-knocks}
We now connect the combinatorial optimum to the KP objective (minimum number of knocks) and show that the resulting optimal gadget set is always executable under the ray constraint.

\begin{figure}[t]
    \centering
    \includegraphics[width=0.9\linewidth]{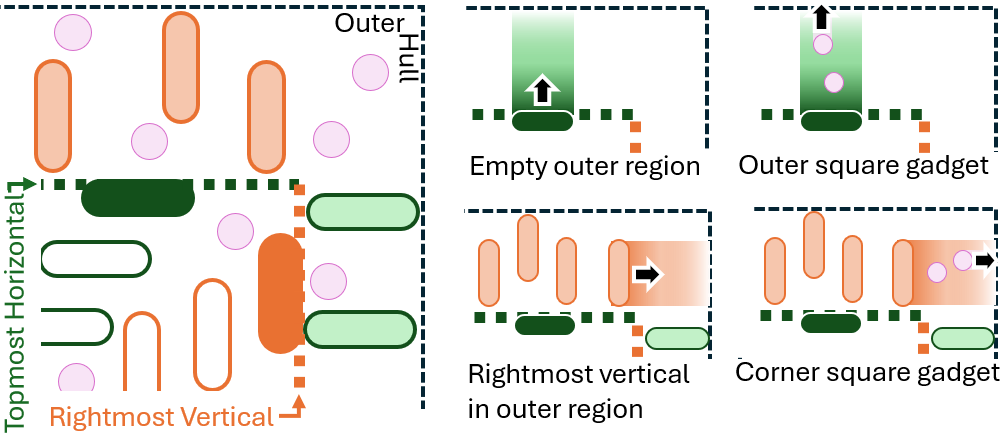}
    \vspace{-8pt}
\caption{
Illustration of Lemma~\ref{lem:sigma-outer-feasible} for the case $\sigma=(-1,1)$, corresponding to the top-right corner.
\textit{Left:} the horizontal and vertical $\sigma$-outer boundaries are determined by the extremal horizontal gadget $S_h^\sigma$ (dark green) and extremal vertical gadget $S_v^\sigma$ (dark orange). Horizontal gadgets are shown in green, vertical gadgets in orange, and square gadgets in pink.
\textit{Right:} representative configurations: an unobstructed boundary gadget (top-left); a boundary gadget whose first blocker and terminal gadget are both square (top-right); a boundary gadget whose first blocker and terminal gadget are both vertical (bottom-left); and a boundary gadget whose first blocker is vertical while its terminal gadget is square (bottom-right). In each subfigure, the arrow indicates a KP-feasible knock, either directly on the boundary gadget or on the terminal gadget reached by following first blockers in the outward boundary directions.
}
    \vspace{-15pt}
    \label{fig:feasibility}
\end{figure}



\begin{cameraready}

\begin{definition}[$\sigma$-outer boundaries, outward directions, and boundary distance]
\label{def:sigma-outer-frontier}
Let $\G$ be a cleaned graph with nonempty face set $U=\Faces(\G)$, and let $\cC$ be an exact cover of $U$ by gadgets in $\cS$.
Fix $\sigma=(\sigma_1,\sigma_2)\in\{\pm1,\pm1\}$, and let $c_\sigma$ denote the corresponding hull corner.
If horizontal gadgets exist in $\cC$, choose
$
S_h^\sigma = R_h(i_h,j_h)
\in \arg\max \{(\sigma_1 i,\sigma_2 j): R_h(i,j)\in\cC\},
$
with lexicographic maximization. If vertical gadgets exist in $\cC$, choose
$
S_v^\sigma = R_v(i_v,j_v)
\in \arg\max \{(\sigma_2 j,\sigma_1 i): R_v(i,j)\in\cC\},
$
again with lexicographic maximization.
Define the outward boundary directions
$
d_h^\sigma\triangleq(\sigma_1,0),
d_v^\sigma\triangleq(0,\sigma_2).
$
These are the directions toward the hull corner $c_\sigma$ for horizontal and vertical gadgets, respectively.
If $S_h^\sigma=R_h(i_h,j_h)$ exists, define its \emph{horizontal $\sigma$-outer boundary coordinate} by
$
b_h^\sigma \;\triangleq\; i_h+\frac{1+\sigma_1}{2},
$
and if no horizontal gadget exists, let $b_h^\sigma$ be the coordinate of the hull edge incident to $c_\sigma$ in direction $d_h^\sigma$.
If $S_v^\sigma=R_v(i_v,j_v)$ exists, define its \emph{vertical $\sigma$-outer boundary coordinate} by
$
b_v^\sigma \;\triangleq\; j_v+\frac{1+\sigma_2}{2},
$
and if no vertical gadget exists, let $b_v^\sigma$ be the coordinate of the hull edge incident to $c_\sigma$ in direction $d_v^\sigma$.
The \emph{horizontal $\sigma$-outer boundary} and \emph{vertical $\sigma$-outer boundary} are
$
B_h^\sigma \;\triangleq\; \{(i,j)\in\cH : i=b_h^\sigma\},
B_v^\sigma \;\triangleq\; \{(i,j)\in\cH : j=b_v^\sigma\}.
$
For a gadget $S$, let $x_\sigma(S)$ denote its candidate vertex facing $c_\sigma$, namely
$
x_\sigma\!\left(Q(i,j)\right)
   =\left(i+\frac{1+\sigma_1}{2},\,j+\frac{1+\sigma_2}{2}\right),
$
$
x_\sigma\!\left(R_h(i,j)\right)
   =\left(i+\frac{1+\sigma_1}{2},\,j+1\right),
$
$
x_\sigma\!\left(R_v(i,j)\right)
   =\left(i+1,\,j+\frac{1+\sigma_2}{2}\right).
$
Define $\delta_\sigma(S)$ as the $\sigma$-distance of a gadget $S$, which is the sum of the distances from the $\sigma$-facing candidate of $S$ to the two hull edges incident to $c_\sigma$.
A \emph{first blocker} of a gadget in an outward direction $d\in\{d_h^\sigma,d_v^\sigma\}$ means any gadget whose vertices intersect the corresponding outward ray and whose first point of intersection occurs at minimum positive distance from the candidate vertex along that ray.

\end{definition}

\begin{lemma}[Existence of a KP-feasible gadget]
\label{lem:sigma-outer-feasible}
Let $\G$ be a cleaned graph with nonempty face set $U=\Faces(\G)$, and let $\cC$ be an exact cover of $U$ by gadgets in $\cS$.
Fix $\sigma\in\{\pm1,\pm1\}$, and let
$
S_h^\sigma,\,
S_v^\sigma,\,
B_h^\sigma,\,
B_v^\sigma,\,
d_h^\sigma,\,
d_v^\sigma,\,
x_\sigma,\,
\delta_\sigma
$
be as in Definition~\ref{def:sigma-outer-frontier}.
Then there exists a gadget $S^\star\in\cC$ and a candidate-direction pair
$
(v^\star,d^\star)\in \Cand(S^\star)\times\mathcal D
$
such that
$
\Ray(v^\star,d^\star)\cap V(\G)=\emptyset.
$
\end{lemma}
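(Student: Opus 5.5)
The plan is an extremal argument in the fixed corner direction $\sigma$: find a gadget of $\cC$ whose $\sigma$-facing candidate $x_\sigma(S)$ sees empty space along one of the two outward directions $d_h^\sigma=(\sigma_1,0)$ or $d_v^\sigma=(0,\sigma_2)$. The witness pair will always be $(v^\star,d^\star)=(x_\sigma(S^\star),d^\star)$ with $d^\star\in\{d_h^\sigma,d_v^\sigma\}$. This is legitimate because, by Definition~\ref{def:cand}, $x_\sigma(S)\in\Cand(S)$ for each gadget type: all four vertices for $Q$, a middle-column vertex for $R_h$, and a middle-row vertex for $R_v$. Since $\cC$ is an exact cover of the nonempty set $U$, the family $\cC$ is nonempty, so the potential $\delta_\sigma$ is defined on a nonempty finite set.

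\textbf{Step 1 (start on the outer boundaries).} If $\cC$ contains horizontal gadgets, I would start from $S_h^\sigma$, which is lexicographically extremal in $(\sigma_1 i,\sigma_2 j)$ and therefore sits on $B_h^\sigma$; otherwise I would start from $S_v^\sigma$ on $B_v^\sigma$, and if neither exists, from a $\sigma$-extremal square gadget. For the starting gadget $S$, test the ray from $x_\sigma(S)$ in its outward direction ($d_h^\sigma$ for $R_h$, $d_v^\sigma$ for $R_v$, either for $Q$). If $\Ray(x_\sigma(S),d)\cap V(\G)=\emptyset$, we are done; this is the unobstructed configuration of Fig.~\ref{fig:feasibility}.

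\textbf{Step 2 (blocker chain with a decreasing potential).} If the ray is blocked, I would take the first blocker $S'$ along that ray and repeat the test from $x_\sigma(S')$, now trying both outward directions. The key claim to prove is $\delta_\sigma(S')<\delta_\sigma(S)$. The reasoning is that $S'$ meets the ray at a point strictly closer to one hull edge at $c_\sigma$. Extremality of $S_h^\sigma$ and $S_v^\sigma$ then rules out that $S'$ is a horizontal gadget protruding past $B_h^\sigma$, or a vertical gadget protruding past $B_v^\sigma$. So $S'$ cannot move away from the other hull edge by more than its own extent, and the $\sigma$-facing corner of $S'$ is strictly nearer $c_\sigma$. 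I would verify this case by case over the ordered type pairs of blocking gadget and blocked gadget: square-square, vertical-vertical, vertical-square, and so on, matching the representative panels of Fig.~\ref{fig:feasibility}. Since $\delta_\sigma$ takes nonnegative integer values and $\cC$ is finite, the chain terminates at a terminal gadget $S^\star$ with no blocker in some outward direction. For this gadget, the ray contains no gadget vertex.

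\textbf{Step 3 (main obstacle: vertices not covered by any face).} The lemma needs $\Ray(v^\star,d^\star)\cap V(\G)=\emptyset$, not merely the absence of gadget vertices on the ray. A cleaned graph can contain vertices lying in no face, for example a degree-$3$ T-junction vertex whose neighbours are themselves stuck. To handle this, I would first try to show that such a vertex $y$ on the ray forces a face further out. Take the $\sigma$-lexicographically extreme vertex $z$ of $V(\G)$ in the relevant half-strip beyond $y$. Because $\G=\Clean(\G)$, $z$ is not pickable. By extremality, $z$ has no neighbours in the directions $\sigma_1$ and $\sigma_2$, so its only possible neighbours are $z-(\sigma_1,0)$ and $z-(0,\sigma_2)$. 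These are non-collinear, so $z$ must have exactly these two neighbours. I then want to propagate this inward to exhibit a full $2\times 2$ face of $\G$ (hence covered by some gadget of $\cC$) meeting the half-strip. Such a gadget would be a blocker contradicting terminality, or would have smaller $\delta_\sigma$, which would let the descent of Step 2 continue. If the propagation does not close directly, my fallback is to run the descent over all vertices of $\G$ using $\delta_\sigma$, and only at the end argue that the terminal obstruction-free vertex can be taken to be a candidate vertex of some gadget.
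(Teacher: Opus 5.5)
\textbf{There is a genuine gap: your Step~3 is never closed.} Your route is the paper's own: start from the $\sigma$-outer gadgets, follow first blockers toward $c_\sigma$, and use $\delta_\sigma$ to force termination.

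A terminal gadget of your chain only has no \emph{gadget} vertex on its outward ray. The lemma needs $\Ray(v^\star,d^\star)\cap V(\G)=\emptyset$. The paper's proof has the same hole: it defines blockers as gadgets and then reads ``no blocker'' as ``unobstructed''.

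Face-free vertices do occur, but not as your T-junction. In a cleaned graph a vertex of degree $\ge 3$ always lies in a face, because its perpendicular neighbour needs a neighbour of its own. L-shaped staircase vertices, e.g.\ on a diamond-shaped ring, are face-free.

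Neither of your remedies works as written.
\begin{itemize}
\item The first is not carried out. Extremality in a half-strip does not exclude sideways neighbours, so you would need a quadrant. Nothing shows that the gadget covering the resulting face has smaller $\delta_\sigma$.
\item The fallback fails, because turning a vertex with a clear ray into a \emph{candidate} with a clear ray is the whole difficulty. For $\sigma=(1,1)$ the lexicographic maximum $z$ of $V(\G)$ always has a clear ray. But if $\Face(z_i-1,z_j-1)$ is covered by $R_v(z_i-2,z_j-1)$, then $z$ lies in no other gadget of $\cC$ (it is a corner of only that face), and it is not a candidate of this one.
\end{itemize}

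Separately, the strict decrease you claim in Step~2 is not a local fact. Take the cleaned parallelogram whose rows $0,1,2$ occupy columns $1$--$3$, $0$--$3$, $0$--$2$, covered by $R_h(0,1)$ and $R_h(1,0)$. The first blocker of $R_h(0,1)$ in direction $(1,0)$ is $R_h(1,0)$, and both have $\delta_{(1,1)}=2$.

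\textbf{A way to close it} drops $\delta_\sigma$ and applies cleanedness directly to row-extreme vertices. The conclusion does not involve $\sigma$, so work at the $(1,1)$ corner. Call $(v,d)$ clear if $\Ray(v,d)\cap V(\G)=\emptyset$.

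Local fact: if $w\in V(\G)$ and $w+(1,0),w+(0,1)\notin V(\G)$, then $w-(1,0),w-(0,1)\in V(\G)$, since otherwise $\deg w\le 1$. If moreover $w+(1,-1)\notin V(\G)$, then $w-(1,1)\in V(\G)$, since otherwise $w-(0,1)$ has only horizontal neighbours and is pickable. So $w$ is the top-right corner of a face.

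Base case. Apply the local fact to $z$ (largest $i$, then largest $j$). The gadget of $\cC$ covering $\Face(z_i-1,z_j-1)$ has only three possible forms, since the other two placements need vertices outside $V(\G)$:
\begin{itemize}
\item $Q(z_i-1,z_j-1)$: then $(z,(1,0))$ is clear.
\item $R_h(z_i-1,z_j-2)$: then $((z_i,z_j-1),(1,0))$ is clear.
\item $R_v(z_i-2,z_j-1)$: continue with the invariant below.
\end{itemize}

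Invariant: $R_v(r-1,b)\in\cC$, and every vertex in rows $\ge r+1$ has column $\le b+1$. The base case gives it with $r=z_i-1$, $b=z_j-1$.

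Step. If $((r,b+1),(0,1))$ is clear, you are done. Otherwise let $w$ be the rightmost vertex of row $r$. Then $w_j\ge b+2$, and the invariant gives $w+(1,0)\notin V(\G)$. The local fact yields $\Face(r-1,w_j-1)$; when $w_j=b+2$, use $(r-1,b+1)\in R_v(r-1,b)$ in place of its second half. The placements $R_h(r-1,w_j-1)$ and $R_v(r-1,w_j-1)$ need vertices outside $V(\G)$, so the covering gadget is one of:
\begin{itemize}
\item a square: then $(w,(0,1))$ is clear;
\item $R_h(r-1,w_j-2)$ with $w_j=b+2$: it would share $\Face(r-1,b)$ with $R_v(r-1,b)$, which exactness forbids;
\item $R_h(r-1,w_j-2)$ with $w_j\ge b+3$: then $((r,w_j-1),(1,0))$ is clear by the invariant;
\item $R_v(r-2,w_j-1)$: this satisfies the invariant with $(r-1,w_j-1)$.
\end{itemize}
The row index strictly decreases, so the process ends at a clear candidate. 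Face-free vertices need no separate treatment, because a blocked ray is only ever used through the rightmost vertex of its row.
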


\begin{proof}
Fix $\sigma\in\{\pm1,\pm1\}$.
Let $H$ be a horizontal gadget on $B_h^\sigma$ maximizing $\sigma_2 j$ and let $V$ be a vertical gadget on $B_v^\sigma$ maximizing $\sigma_1 i$, whenever such gadgets exist.
If either $H$ or $V$ has an unobstructed outward boundary direction toward $c_\sigma$, then it is KP-feasible. Assume therefore that both $H$ and $V$, whenever they exist, are blocked in the outward directions $d_h^\sigma$ and $d_v^\sigma$. Starting from $H$ and $V$, successively take first blockers in the corresponding outward directions. At each step, the blocker does not increase the $\sigma$-distance $\delta_\sigma$ and moves strictly toward the hull corner $c_\sigma$ along at least one of the two directions $d_h^\sigma$ or $d_v^\sigma$. 
Intuitively, this holds since choosing any blocker towards $d_h^\sigma$ or $d_v^\sigma$ (which are orthogonal directions) is guaranteed to be a gadget whose $\sigma$-distance is not larger than the current gadget. Observe that these gadgets with non-increasing $\sigma$-distance are broadly contained within the diagonal region between the current gadget and the corner $c_\sigma$. With each iteration, this process will reduce the region or set of possible blockers to consider, since by structure backtracking is not possible.
Since the hull is bounded, this blocker process cannot continue indefinitely. Hence it terminates at a gadget $S^\star$ that has no blocker in one of its relevant outward directions, i.e., offering an unobstructed knock direction $d^\star$ for candidate $v^\star=x_\sigma(S^\star)$.
Therefore $S^\star$ is KP-feasible. 
Note that the guaranteed existence of such a gadget is of interest, not the precise process of finding this gadget or its type. Hence, showing that such a $S^\star$ should exist suffices for our arguments. 
\end{proof}

\end{cameraready}

\noindent




\begin{theorem}[Optimal and feasible knock set]\label{thm:knocks-equals-exactcover}
Let $\G^\circ=\Clean(\G(\cB))$ and $\cU=\Faces(\G^\circ)$.
Let $\cC^\star$ be a minimum-cardinality exact cover of $(\cU,\cS)$ obtained from a maximum-weight perfect matching of $\cG_F$. 
Then: (i) The minimum number of knocks in any complete KP execution equals $|\cC^\star|$; (ii) There exists a KP-feasible execution achieving exactly $|\cC^\star|$ knocks.
\end{theorem}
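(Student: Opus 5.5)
We prove (i) as a lower bound combined with the upper bound from (ii), arguing throughout on the cleaned graph $\G^\circ$. The initial cleanup uses only picks, and removing pickable vertices never destroys a face or makes a knock necessary. So no generality is lost by assuming every execution first reaches $\G^\circ$; more precisely, any execution on $\G_0$ can be reordered so that the picks of $\Clean$ come first without increasing the number of knocks.

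\textbf{Lower bound.} Take an arbitrary complete KP execution with $K$ knocks, and charge each face $f\in\cU$ to the action that first removes a vertex of $f$. By Lemma~\ref{lem:face-needs-knock} this first removal is a knock. Strictly, that lemma covers only the initial cleaned graph. To extend it to later stages we use that picks never destroy faces: by Lemma~\ref{lem:gadget-nonpick}, a vertex lying in an intact face is never pickable. Hence every face is destroyed by some knock. By Lemma~\ref{lem:knock-two-faces}, each knock destroys at most two faces, and when it destroys two, those faces are adjacent and form a fully occupied $2\times3$ or $3\times2$ gadget. That gadget lies inside $\G^\circ$, since all vertices present at any time are a subset of $\G^\circ$, so the pair is a set in $\cS$. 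Each knock $\kappa$ thus determines a set $S_\kappa\in\cS$: either the pair it destroys, or the singleton of the one face it destroys. Knocks destroying no face are dropped. Every face is destroyed exactly once, so the sets $S_\kappa$ are pairwise disjoint and their union is $\cU$. They therefore form an exact cover of size at most $K$, which gives $K\ge|\cC^\star|$ by Lemma~\ref{lem:optexactcover-matching}.

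\textbf{Upper bound and feasibility.} We proceed by induction on $|\cC|$ with the following invariant. The current graph $\G$ is cleaned, and the remaining cover $\cC$, restricted to the surviving gadgets, is an exact cover of $\Faces(\G)$ in which every gadget is fully occupied.
\begin{itemize}
\item If $\Faces(\G)=\emptyset$, we claim a cleaned graph with no faces is empty. Any nonempty cleaned grid graph contains a vertex of degree at most $1$, or of degree $2$ with collinear neighbors, unless a $2\times2$ square is present. To see this, consider the lexicographically minimal vertex: its neighbors can only lie in two perpendicular directions, so if it has degree $2$ and is not pickable, it closes a square together with the diagonal vertex. If that diagonal vertex is absent, the vertex is pickable.
\item Otherwise, Lemma~\ref{lem:sigma-outer-feasible} supplies a gadget $S^\star\in\cC$ and a candidate $v^\star\in\Cand(S^\star)$ with a clear ray. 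We knock $v^\star$. Because $v^\star$ lies on the shared boundary, the knock destroys exactly the faces of $S^\star$. We then apply $\Clean$. As in Lemma~\ref{lem:gadget-oneknock}, the picks remove the remainder of $S^\star$ only to the extent that it is no longer protected by other faces.
\end{itemize}
Picks never destroy faces, so the other gadgets of $\cC$ remain fully occupied, and $\cC\setminus\{S^\star\}$ is an exact cover of the new face set. Each step uses exactly one knock, giving $|\cC^\star|$ knocks in total. Combined with the lower bound, this proves (i).

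\textbf{Main obstacle.} The delicate step is the invariant in the upper bound. We need that after knocking $v^\star$ and cleaning, the faces destroyed are exactly those of $S^\star$, with no face of another gadget disturbed and no new face created; the latter is clear because deletions cannot create faces. We also need that the restricted cover continues to satisfy the hypotheses of Lemma~\ref{lem:sigma-outer-feasible}. Disjointness of the cover handles the first point, since $v^\star$ can be a corner only of faces belonging to $S^\star$: by Lemma~\ref{lem:knock-two-faces}, a knockable $v^\star$ lies in at most two faces, and both are in $S^\star$ for the shared-boundary choice. For a square gadget, however, $v^\star$ might also be a corner of a face belonging to another gadget. In that case we would instead argue that the number of knocks does not exceed the number of gadgets destroyed. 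An exact cover is then still charged one knock per remaining gadget, so the count $|\cC^\star|$ is only improved, and optimality from the lower bound forces equality.
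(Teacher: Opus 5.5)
Your proposal is correct and follows the paper's proof. The lower bound charges each face to the knock that first destroys it and reads off an exact cover via Lemmas~\ref{lem:face-needs-knock} and~\ref{lem:knock-two-faces}. The upper bound inducts on the cover using Lemma~\ref{lem:sigma-outer-feasible}. Your handling of a square-gadget knock that also destroys a face of another gadget is more careful than the paper's assertion that $\cC\setminus\{S^\star\}$ is always an exact cover of the new face set: each knock still shrinks the restricted cover by at least one, giving at most $|\cC^\star|$ knocks, and the lower bound forces equality.

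One local fix is needed in your base case. The absence of the diagonal $(i+1,j+1)$ does not make the lexicographically minimal vertex $(i,j)$ pickable, since it still has two perpendicular neighbors. It does make the neighbor $(i,j+1)$ pickable, because $(i-1,j+1)$ is absent by minimality, so a nonempty cleaned graph must contain a face.
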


\begin{proof}

\emph{(i) Optimality: minimum knocks $=|\cC^\star|$.}
Consider any complete KP execution starting from $\G^\circ$.
For each face $f\in\cU$, let its first destroyed vertex be assigned to the knock that destroys it.
By Lemma~\ref{lem:face-needs-knock}, this first destruction must be a knock.
By Lemma~\ref{lem:knock-two-faces}, each knock destroys at most two faces, and two faces can be destroyed together only when they form a horizontal or vertical two-face gadget.
Hence the knocks induce a pairwise-disjoint family of sets from $\cS$ covering $\cU$, that is, an exact cover $\cC$ with $|\cC|\le \#\text{knocks}$. Therefore,
$
\#\text{knocks} \ge \min\{|\cC|:\cC\text{ exact cover}\} = |\cC^\star|.
$
Conversely, each set in an exact cover can be destroyed by one knock (Lemma~\ref{lem:gadget-oneknock}), so $|\cC^\star|$ knocks suffice in principle. This proves (i).

\emph{(ii) Every exact cover has a KP-feasible execution.}
Let $\cC$ be an exact cover of $\cU=\Faces(\G^\circ)$. We argue by induction on $|\cC|$.
If $\cU=\emptyset$, exhaustive picking removes all remaining vertices and the execution terminates.
Assume $\cU\neq\emptyset$, and fix any $\sigma\in\{\pm1\}^2$.
By Lemma~\ref{lem:sigma-outer-feasible}, the $\sigma$-outer boundary construction determines a KP-feasible gadget
$
S^\star\in\cC.
$
Execute knocking on $S^\star$ and then apply exhaustive picking.
By Lemma~\ref{lem:gadget-oneknock}, the face(s) covered by $S^\star$ are destroyed, so if
$
\cC' \triangleq \cC\setminus\{S^\star\},
$
then $\cC'$ is an exact cover of the remaining face set $\Faces(G')$, where $G'$ is the resulting cleaned graph.
By induction, $G'$ admits a complete KP execution using exactly $|\cC'|$ knocks.
Prepending the feasible knock of $(v^\star,d^\star)$ yields a complete KP execution for $G$ using exactly
$
1+|\cC'| = |\cC|
$
knocks.
Applying this to $\cC=\cC^\star$ yields a complete KP-feasible execution using exactly $|\cC^\star|$ knocks.

\end{proof}

\vspace{-20pt}
\subsection{Optimal KP Solver}
Theorem~\ref{thm:knocks-equals-exactcover} characterizes the optimal knock count via an optimal exact cover (equivalently, a maximum-weight perfect matching in $\cG_F$).
To produce an \emph{executable} KP sequence under the ray constraint~\eqref{eq:rayclear},
we use a local selection routine that checks ray feasibility and respects the gadget-dependent candidate sets.

\begin{algorithm}[t!]
\caption{\textsc{OPTKPSolver}}
\label{alg:execute}
\begin{algorithmic}[1]
\Require Set of blocks $\objects$ with their grid-arranged locations
\Ensure A complete KP execution $\pi$
\State $G \gets \textsc{getGridGraph}(\objects)$
\State $\pi_{\mathrm{PICKS}} \gets \textsc{pick}(G)$ \Comment{initial feasible pick sequence}
\State $\pi \gets \pi \oplus \pi_{\mathrm{PICKS}}$; \quad $G\gets G \setminus \pi_{\mathrm{PICKS}}$
\State $\cU \gets \textsc{getFaces}(G)$ 
\State $\cG_F \gets \textsc{buildFaceGraph}(\cU, G)$ 
\State $\cM^{\star} \gets \textsc{maxWeightPerfectMatching}(\cG_F)$
\State $\cC^{\star} \gets \textsc{edgeCoverToGadgets}(\cM^{\star})$ 
\While{$G.V \neq \emptyset$}
    \State $(v,d) \gets \textsc{chooseKnock}(\cC^{\star}, G)$ \Comment{next feasible knock}
    \State $\pi \gets \pi \oplus (v,d)$;\quad $G\gets G \setminus \{v\}$
    \State $\pi_{\mathrm{PICKS}} \gets \textsc{pick}(G)$ \Comment{next feasible pick sequence}
    \State $\pi \gets \pi \oplus \pi_{\mathrm{PICKS}}$; \quad $G\gets G \setminus \pi_{\mathrm{PICKS}}$
\EndWhile
\State \Return $\pi$
\end{algorithmic}
\end{algorithm}

Algorithm~\ref{alg:execute} alternates exhaustive picking with single feasible knocks. 
After initial cleanup, it extracts the face set, builds the face-adjacency graph, and computes an optimal perfect matching \(\cM^\star\), which yields a minimum-cardinality exact cover \(\cC^\star\) by Lemma~\ref{lem:exactcover-matching}. 
By Theorem~\ref{thm:knocks-equals-exactcover}, \(|\cC^\star|\) is the optimal knock count and \(\cC^\star\) admits a KP-feasible execution. 
The routine \(\textsc{chooseKnock}\) selects a ray-feasible candidate from the remaining gadgets of \(\cC^\star\), and each such knock is followed by exhaustive picking until the graph is empty. 
Hence the returned execution is complete and optimal in the number of knocks.

\section{Results}
\label{sec:benchmarks}

We evaluate \textsc{OPTKPSolver} in two settings: (i) a fast \emph{synthetic} benchmark that isolates the combinatorial structure, and (ii) an \emph{IsaacSim} benchmark that executes the resulting KP sequence with a Franka Panda arm under physics.
Across all experiments, the \emph{knock set} is computed by our \emph{optimal} reduction to maximum-weight perfect matching on the face graph (Section~\ref{sec:method}). Hence all reported knock counts are globally optimal for the cleaned instance.

\subsection{Synthetic Benchmark}
\paragraph{Setup.}
We consider two families of instances:
(i) \emph{full grids} of sizes $3\!\times\!3$, $5\!\times\!5$, $10\!\times\!5$, $10\!\times\!10$, $20\!\times\!10$, and $20\!\times\!20$; and
(ii) \emph{subgraphs} formed by uniformly deleting vertices from the corresponding full grid to obtain $|V|\in\{5,16,25,50,100,200\}$.
For each subgraph size we generate $N=20$ random instances and report averages.
All synthetic experiments are run on an AMD Ryzen 9 9950X 16-Core CPU.
The maximum-weight perfect matching is computed using the 
{NetworkX}~\cite{hagberg2007exploring} implementation of Edmonds' blossom algorithm~\cite{galil1986efficient}.

\begin{wraptable}[17]{r}{0.5\textwidth}
\centering
\vspace{-34pt}
\caption{Synthetic results (optimal knock count via maximum-weight perfect matching) on full grids and random subgraphs.}
\label{tab:synthetic}
\setlength{\tabcolsep}{4pt} 
\begin{tabular}{lrSS}
\toprule
grid dims & $|V|$ & {\#knocks} & {$t_{\text{total}}$ (ms)} \\
\midrule
 3x3   &   9 &  2.00  &   0.55 \\
 5x5   &  25 &  8.00  &   1.47 \\
 10x5  &  50 & 18.00  &   3.51 \\
 10x10 & 100 & 41.00  &  13.05 \\
 20x10 & 200 & 86.00  &  61.62 \\
 20x20 & 400 & 181.00 & 189.79 \\
\midrule
 3x3   &   5 &  1.00  &  0.13 \\
 5x5   &  16 &  3.12  &  0.47 \\
 10x5  &  25 &  7.00  &  1.43 \\
 10x10 &  50 & 14.04  &  3.03 \\
 20x10 & 100 & 27.36  &  9.40 \\
 20x20 & 200 & 60.16  & 36.02 \\
\bottomrule
\end{tabular}
\end{wraptable}
\paragraph{Metrics.}
We report (i) the optimal number of knocks, and (ii) total solver time
$t_{\text{total}}$ (ms), which includes computing the matching / exact cover and producing an executable knock--pick sequence.
The number of actions is always $|V|+\#\text{knocks}$ (each object is removed exactly once, and each knock adds one action), so we omit it from the tables.


\paragraph{Observations.}
Table~\ref{tab:synthetic} shows the statistics from the synthetic benchmark. On full grids, the optimal knock count closely tracks half the number of unit faces.
Indeed, a full $m\times n$ grid contains $|\cU|=(m-1)(n-1)$ faces, and the matching-based optimum prefers weight-$2$ edges whenever possible, yielding $\lceil |\cU|/2\rceil$ knocks (e.g., $20\times 20$ has $361$ faces and the optimum is $181$ knocks).
For subgraphs, random removal of vertices allows more blocks to be pickable and the face graph is sparser, reducing the knock-to-face ratio as expected.

In terms of computation, $t_{\text{total}}$ increases smoothly with instance size and remains well below a second up to $|V|=400$. This matches the polynomial-time nature of maximum-weight matching and the subsequent linear-time sequence construction in our pipeline.

\subsection{IsaacSim Benchmark}
\paragraph{Setup.}
We execute the KP sequences with a Franka Panda (7-DoF) and parallel gripper in Isaac Sim 5.0.0 using the PhysX engine. Motion planning is performed in ROS2~\cite{scirobotics.abm6074} and MoveIt2 task constructor~\cite{gorner2019moveit}, using RRTConnect~\cite{kuffner2000rrt} in OMPL~\cite{sucan2012open} for each pick/knock motion. Experiments are run on an RTX~5090 GPU workstation. We test full grids of size $3\times 3$ and $5\times 5$ (10 runs each), and random subgraphs with $|V|{=}5$ (from $3\times 3$) and $|V|{=}16$ (from $5\times 5$). For each subgraph size we use 5 random problems and repeat each 10 times (50 runs total).

\paragraph{Metrics.}
We report success rate, optimal knock count, computation time $t_{\text{comp}}$ (MoveIt/OMPL), execution time $t_{\text{exec}}$ (IsaacSim), all in seconds.
We additionally report \texttt{knRetry}, the mean number of knock retries per run: if a knock does not create sufficient clearance due to contact/friction/torque effects, the same knock motion is repeated until clearance is detected or the run is declared failed.

\begin{table}[t]
\centering
\caption{IsaacSim results. Knock counts are optimal for the cleaned instance. \texttt{knRetry} reports the mean number of physics-triggered knock retries per run.}
\label{tab:isaac}
\setlength{\tabcolsep}{3pt}
\begin{tabular}{lrSSSSSS}
\toprule
grid dims & $|V|$ & {run succ.} & {obj pick succ.} & {\#knocks} & {$t_{\text{comp}}$ (s)} & {$t_{\text{exec}}$ (s)} & {\texttt{knRetry}} \\
\midrule
3x3     & 9  & 1.0 & 1.00 & 2.0 &  4.50 &  92.07 & 0.70 \\
5x5     & 25 &  0.8 & 0.97 & 8.0 & 13.56 & 272.48 & 5.00 \\
\midrule
3x3      & 5  & 1.0 & 1.00 & 1.0 &  2.44 &  51.03 & 0.56 \\
5x5      & 16 & 1.0 & 1.00 & 3.6 &  8.49 & 165.41 & 1.44 \\
\bottomrule
\end{tabular}
\end{table}

\paragraph{Observations.}
Table~\ref{tab:isaac} shows the statistics of the IsaacSim benchmark. Execution dominates total time: physical simulation execution time $t_{\text{exec}}$ is one to two orders of magnitude larger than motion planning time $t_{\text{mp}}$ across all cases, since each action requires approach, contact, and retraction under physics.
While the \emph{number of actions} scales as $|V|+\#\text{knocks}$, the wall-clock execution time is also sensitive to physical interactions during knocks.

The \texttt{knRetry} column makes this explicit: larger and denser instances (notably full $5\times 5$) require more retries on average, reflecting the fact that a nominally valid knock (ray-feasible in the discrete model) may under-deliver displacement when realized with a finite-stiffness gripper and frictional contacts. 
This same effect explains why the full $5\times 5$ case exhibits runs where primitive failures were recorded: some runs enter unrecoverable contact configurations where repeated knock attempts cannot produce sufficient clearance. Despite primitives failing in two runs (run succ. of $0.8$), proceeding with the remaining actions succeeded to exhaustively pick almost all of the objects (pick success of $0.97$). 


\section{Discussion}

This work studies grid-aligned, tightly packed tabletop rearrangement where purely prehensile removal with a parallel gripper can be blocked. Allowing a ray-constrained directional knock enables progress, and the minimum-knock objective admits an exact combinatorial characterization: after cleanup, optimal knock sets correspond to minimum exact face covers, computed optimally via matching on the face graph. Executability is recovered by interleaving one ray-feasible knock (chosen from gadget-specific candidate vertices) with exhaustive picking, which monotonically reduces the instance until all blocks are removed.

\paragraph{Empirical observations.}
The synthetic benchmark indicates that the optimal solver scales smoothly with instance size and that runtime grows polynomially with the number of objects/faces, consistent with the matching-based reduction. In Isaac Sim, the same optimal knock counts are realized, while total wall-clock time is dominated by physics-based execution rather than the combinatorial solver. Knock retries were occasionally required: even when ray-clearance holds, contact dynamics (e.g., friction, gripper compliance, impulse transfer) can prevent a block from fully leaving its pocket. This highlights the gap between discrete clearance abstractions and continuous rigid-body interactions.

\paragraph{Limitations.}

The setting being studied is heavily simplified --- top-down reachable, uniformly sized, blocks arranged in a grid, with hand-crafted non-prehensile primitives. The setting allows us to glean theoretical insights but extending to practical robust performance still poses open, challenging questions. The Sim-to-Real gap will be a persistent bottleneck in applying combinatorial graphical abstractions to real-world executions with physics. The primitive itself is simplistic and represents the minimal operation needed to locally relax adjacency constraints. There may be more sophisticated contact-rich non-prehensile primitives, including aggregating push strategies that may benefit some tasks. Bespoke fingered end-effectors may pose their own constrained abstraction and complexity. The placement side of the problem is also assumed to be unconstrained, which though valid in non-overlapping situations, need to be explored in general overlapping scenaros. 
Moreoever, theoretical insights into generalized object shapes start converging on typical considerations within general assembly planning and it is to be seen whether the current theoretical efficiencies can be used to speed up more general settings. 

Despite the current result being a preliminary step, the theoretical insight into the studied category of problems permitting computationally efficient solutions to compute optimal task plans, promises opportunities to build principled theoretically-driven practically-robust strategies that can seamlessly switch between diverse prehensile and non-prehensile strategies that balance practical performance with theoretical guarantees. 

\bibliographystyle{splncs04}
\bibliography{mybibliography}

\end{document}